\setlist[enumerate]{leftmargin=.5in}
\setlist[itemize]{leftmargin=.5in}
\theoremstyle{definition}
\newtheorem{definition}{Definition}[section]
\newtheorem{theorem}{Theorem}[section]
\newtheorem{proposition}{Proposition}[section]
\newtheorem{lemma}{Lemma}[section]
\theoremstyle{remark}
\newcommand{\R}{\mathbb{R}}
\newcommand{\mc}[1]{\mathcal{#1}}
\DeclareMathOperator{\diam}{diam}
\DeclareMathOperator{\pd}{PD}
\newcommand{\iid}{\stackrel{\text{iid}}{\sim}}
\newcommand{\var}{\mathrm{Var}}
\newcommand{\mathsc}[1]{{\normalfont\textsc{#1}}}
\newcommand{\mcd}[1]{\mathcal{D}_{\text{#1}}}
\title{Uncertainty of Network Topology with Applications to Out-of-Distribution Detection}
\author{
 Sing-Yuan Yeh \\
  Data Science Degree Program \\
  National Taiwan University and Academia Sinica\\
  Taipei 106, Taiwan\\
  \texttt{d10948003@ntu.edu.tw} \\
   \And
 \href{https://orcid.org/0000-0002-2522-5957}{\includegraphics[scale=0.06]{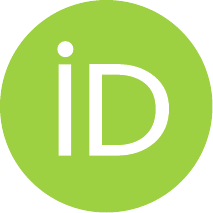}\hspace{1mm} Chun-Hao Yang}\thanks{Corresponding author.} \\
  Institute of Statistics and Data Science\\
  National Taiwan University\\
  Taipei 106, Taiwan\\
  \texttt{chunhaoy@ntu.edu.tw} \\
}
\begin{document}
\maketitle
\begin{abstract}
Persistent homology (PH) is a crucial concept in computational topology, providing a multiscale topological description of a space. It is particularly significant in topological data analysis, which aims to make statistical inference from a topological perspective. In this work, we introduce a new topological summary for Bayesian neural networks, termed the \textit{predictive topological uncertainty (pTU)}. The proposed pTU measures the uncertainty in the interaction between the model and the inputs. It provides insights from the model perspective: if two samples interact with a model in a similar way, then they are considered identically distributed. We also show that the pTU is insensitive to the model architecture. As an application, pTU is used to solve the out-of-distribution (OOD) detection problem, which is critical to ensure model reliability. Failure to detect OOD input can lead to incorrect and unreliable predictions. To address this issue, we propose a significance test for OOD based on the pTU, providing a statistical framework for this issue. The effectiveness of the framework is validated through various experiments, in terms of its statistical power, sensitivity, and robustness.
\end{abstract}


\section{Introduction}\label{sec:intro}

Over the past few decades, deep neural networks have achieved remarkable successes in various fields, e.g., computer vision, medical image analysis, and natural language processing. However, for models to be reliable, uncertainty must be taken into account. Uncertainty quantification (UQ) is a crucial aspect of machine learning, as it provides insights into the model's reliability and robustness. The UQ methods for deep learning are broadly categorized into three groups: Bayesian methods (e.g., MC dropout \cite{gal2016dropout}, Bayes By Backprop \cite{blundell2015weight}, and many others), ensemble methods (e.g., deep ensemble \cite{lakshminarayanan2017simple,rahaman2021uncertainty}), and others. See \cite{abdar2021review} for a comprehensive review of UQ methods in deep learning. In this work, we focus mainly on Bayesian neural networks.

\paragraph{Bayesian Neural Network} Compared to a typical neural network, a Bayesian neural network requires an additional assumption on the prior distributions on the network weights, which are usually assumed to be Gaussian distributions. Applying the Bayes theorem, one gets the unnormalized posterior distributions. However, due to the ultra-high dimensionality, the intractable normalizing constant, and complicated network structures, direct and exact inference for the posterior distributions of the weights is impossible. An alternative approach is to consider variational inference, which seeks approximation to the actual posterior. For example, the author of \cite{ranganath2014black} proposed the black-box variational inference, which maximizes the evidence lower bound to find the approximation. This is equivalent to minimizing the Kullback-Leibler divergence to the actual posterior. See Table B.3 in \cite{abdar2021review} for a more comprehensive list of methods in this line of research. With approximate posterior distributions for the network weights, the uncertainty is directly associated with the randomness of the posterior distributions, usually defined as the posterior variance or the entropy of the posterior distribution.

\paragraph{Topological Uncertainty} There are two main types of uncertainty: epistemic uncertainty, which arises from the model's lack of knowledge, and aleatoric uncertainty, which arises from the inherent randomness in the data \cite{hullermeier2021aleatoric}. In recent years, there has been a growing interest in applying topological tools, in particular the persistent homology (PH), to analyze neural networks. PH is a crucial concept in computational topology, providing a multiscale topological description of a space. It is particularly significant in topological data analysis, which aims to make statistical inference from a topological perspective. For example, \cite{rieck2019neural} used PH to measure the complexity of neural networks, while \cite{naitzat2020topology} studied the topological properties of neural networks. \cite{lacombe2021topological} introduced the concept of topological uncertainty that describes the uncertainty in the activation graph of a neural network. The notion of activation graph is proposed by \cite{gebhart2019characterizing}. As opposed to static graphs (i.e., the weights in a neural network), activation graphs are able to capture the interaction between the input and the network. However, the uncertainty proposed by \cite{lacombe2021topological} fails to consider the model uncertainty, i.e., the uncertainty in the model parameters. The main goal of this work is to quantify the topological uncertainty of Bayesian neural networks.

\paragraph{Contribution}
In this work, we introduce the concept of \textit{predictive topological uncertainty (pTU)}, which measures the uncertainty in the interaction between the model and the input to be predicted. The pTU is defined as the variation of the persistent homology of the activation graphs. A more precise definition will be given in Section~\ref{sec:pTU}. The pTU provides insights from the perspective of model-input interactions: if two samples interact with a model in a similar way, then they are considered identically distributed. The following statistical inference is made based on the distribution of pTU, rather than the value of pTU. The main reason for this choice is that making inference based on the distribution incorporates the randomness inherent to the inputs. 

As an application, pTU is used to solve the out-of-distribution (OOD) detection problem. OOD is an important issue when applying a trained model to unseen data. Failure to detect OOD inputs can lead to incorrect and unreliable predictions. There are three kinds of OOD: (i) distribution shift (shift in the distributions of both the inputs and the outputs), (ii) covariate shift (shift in the input distribution), and (iii) semantic shift (shift in the conditional distribution of output given input). See \cite{yang2021generalized} for a survey on the research of OOD detection. In this work, we focus on the covariate shift (CS) problem. To handle this, we propose a significance test for CS based on the deviation of the pTU distribution, providing a statistical framework for the detection of CS. 

\textit{To sum up, the main contribution of this work is two-fold: (i) we propose a novel definition of topological uncertainty for Bayesian neural networks, and (ii) we derive a significance test for CS based on pTU and we validate its performance in terms of its statistical power, robustness, and sensitivity.}

The rest of the paper is organized as follows. In Section~\ref{sec:homology}, we briefly review the concept of persistent homology and how to compute the persistent homology of the activation graph of a neural network. In Section~\ref{sec:pTU}, we present the definition of predictive topological uncertainty and its estimator. In Section~\ref{sec:OOD_test}, we propose a significance test for CS based on the pTU. In Section~\ref{sec:experiment}, we evaluate the proposed framework in terms of its sensitivity, robustness, and statistical power and show that our method can effectively detect the presence of CS. Finally, we conclude this paper in Section~\ref{sec:conclusion} by discussing the advantages of our methods and some potential issues that are worth further investigation. 
\section{Persistent Homology of Neural Networks}\label{sec:homology}

In this section, we will first review the concept of persistent homology, as well as its computation and representation. Next, we review the construction of activation graphs \cite{gebhart2019characterizing} from neural networks. 

\subsection{Persistent Homology and Persistence Diagram}

\paragraph{Homology} Homology groups are fundamental constructs in algebraic topology, providing a way to algebraically describe topological spaces. For a topological space $\mc{X}$, the $k$-th homology group $H_k(\mathcal{X})$ of captures information about the $k$-dimensional holes in $\mathcal{X}$. Each of these homology groups provides critical insights into the underlying structure of the topological space, contributing to a comprehensive understanding of its geometric and topological properties. For each $k$, the \emph{$k$-th Betti number}, $\beta_k$, is the rank of the $k$-th homology group. That is, the $k$-th Betti number $\beta_k$ represents the number of $k$-dimensional holes in a topological space. These Betti numbers essentially summarize the topological structure of the space.

\paragraph{Persistent Homology} Given a filtration $\emptyset \subseteq \mc{X}_1 \subseteq \mc{X}_2 \subseteq \cdots \subseteq \mc{X}_n = \mc{X}$, we have the homology groups for each space $H_k(\mc{X}_i)$, $i = 1,\ldots, n$, $k = 0, 1, 2,\ldots$. The inclusion $\mc{X}_i \subseteq \mc{X}_j$ induces homeomorhphisms $f_{k}^{i,j}: H_k(\mc{X}_i) \to H_k(\mc{X}_j)$ for each $k$. The \emph{$k$-th persistent homology group} is the image of $f^{i,j}_k$, denoted by $H^{i,j}_k = \text{im} f_{k}^{i,j}$ for $1 \leq i \leq j \leq n$. The $H_k^{i,j}$ contains $k$-dimensional topological information of $\mc{X}_i$ that are still present in $\mc{X}_j$. The \emph{$k$-th persistent Betti number} $\beta_k^{i, j} = \text{rank}(H_k^{i,j})$ is the number of $k$-dimensional holes in $\mc{X}_i$ that are still in $\mc{X}_j$. Let $\mu_k^{i, j}$ be the number of $k$-dimensional holes born at $\mc{X}_i$ and dying entering $\mc{X}_j$, i.e.,
\begin{align*}
    \mu_k^{i, j}=\left(\beta_k^{i, j-1}-\beta_k^{i, j}\right)-\left(\beta_k^{i-1, j-1}-\beta_k^{i-1, j}\right).
\end{align*}
The collection $\{(i, j, \mu_k^{i,j}): 1 \leq i \leq j \leq n\}$ is called the \emph{$k$-th persistence diagram} and the collection of all $k$-th persistence diagrams, $k = 0, 1, 2, \ldots$, is called the \emph{persistence diagram (PD) of $\mc{X}$}, denoted by $\pd(\mc{X})$. For a more detailed description, see the books by \cite{zomorodian2005topology} and \cite{edelsbrunner2010computational}. 

\paragraph{Computation} One of the commonly used filtrations is the \emph{Vietoris–Rips filtration}. Let $\mathbb{X} = \{x_1, \ldots, x_n\} \subseteq \R^d$ be a point cloud. The Vietoris-Rips complex of $\mathbb{X}$ with diameter $r$ is $\textbf{VR}_r(\mathbb{X}) = \{ S \subseteq \mathbb{X}: \diam S \leq r\}$, where $\diam S = \max_{x_i, x_j \in S} \|x_i - x_j\|$. With $0 \leq r_1 \leq r_2 \leq \cdots \leq r_m$, we have the Vietoris-Rips filtration
$$
    \emptyset \subseteq \textbf{VR}_{r_1}(\mathbb{X}) \subseteq \textbf{VR}_{r_2}(\mathbb{X}) \subseteq \cdots \subseteq \textbf{VR}_{r_m}(\mathbb{X}).
$$
The $k$-th PD of $\mathbb{X}$, denoted by $\pd_k(\mathbb{X})$, is the collection of $(r_i, r_j, \mu_k^{i,j})$; that is, the number of $k$-dimensional holes that are born at $r_i$ and die at $r_j$ is $\mu_k^{i,j}$. There are other ways to represent the persistent homology, for example persistence images \cite{adams2017persistence} and persistence landscapes \cite{bubenik2015statistical}. By viewing PDs as empirical distributions, the commonly used distance metrics, the Wasserstein distance \cite{cohen2010lipschitz} and the bottleneck distance \cite{cohen2007stability}, can be used to measure the distance between persistence diagrams.

\subsection{Activation Graph}\label{sec:pd}

Let $F_{\theta}$ be a $L$-layer neural network; that is, for $x \in \R^{h_0}$,
\[
    F_{\theta}(x) = f_{L}\circ f_{L-1} \circ \cdots \circ f_{2} \circ f_1(x)
\]
where $f_{\ell}: \R^{h_{\ell-1}} \to \R^{h_{\ell}}$ is defined by $f_{\ell}(x) = \sigma(W_{\ell}x + b_{\ell})$ and $\sigma: \R \to \R$ is called an activation function, which is an element-wise operation on $\R^{h_{\ell}}$. Here, the number of neurons in the $\ell$-th layer is denoted $h_{\ell}$, ${\ell} = 0,\ldots, L$ with $h_0$ and $h_{L}$ being the dimension of input and output. The parameter to be learned is 
\[
    \theta = \left\{(W_{\ell}, b_{\ell}): W_{\ell} \in \R^{h_{\ell}\times h_{\ell-1}}, b_{\ell} \in \R^{h_{\ell}}\right\}_{\ell=1}^L.
\] 
Every two consecutive layers can be treated as a \textit{weighted bipartite graph}: the nodes are the neurons in the $(\ell-1)$-th and $\ell$-th layers and the edge weights are given by the absolute value of the model weights $W_{\ell}$. This is referred to as the static graph since it depends only on the model weights. The author of \cite{gebhart2019characterizing} introduced the concept of \textit{activation graph} in which the edge weights are defined based on an input and the model weights. Specifically, if $x_{\ell-1} \in \R^{h_{\ell-1}}$ is an input to the $\ell$-th layer of wights, then the weight on the edge connecting the $i$-th neuron in $(\ell-1)$-th layer and the $j$-th neuron in the $\ell$-th layer is $|W_{\ell}(j,i)x_{\ell-1}(i)|$ where $x_{\ell-1}(i)$ is the $i$-th entry of $x_{\ell-1}$ and $W_{\ell}(j,i)$ is the $(j,i)$-th entry of $W_{\ell}$. The quantities $|W_{\ell}(j,i)x_{\ell-1}(i)|$ characterize the interaction between the input and the model and the network topology. Formally, let the vertex set $V_\ell$ collect the neurons from both the $(\ell-1)$-th and $\ell$-th neurons and the edge set $E_\ell$ collect all connections between the $(\ell-1)$-th and $\ell$-th neurons. Moreover, each edge $(i,j)\in E_\ell$ endows a weight $|W_{\ell}(j,i)x_{\ell-1}(i)|$. The weighted bipartile graph $(V_\ell,E_\ell,\phi_\ell)$ is the activation graph, where $\phi_\ell:E\to \mathbb{R}_{\geq 0}$ defined by $\phi_\ell(i,j)=|W_{\ell}(j,i)x_{\ell-1}(i)|$. See Figure~\ref{fig:network} for an illustration.

To compute the PH of the activation graphs, we follow the construction proposed by \cite{rieck2019neural}. First, a filtration is defined based on the magnitude of the edge weights. Given this filtration, the zero-dimensional PD is equivalent to the empirical distribution of the weights of its maximum spanning tree\footnote{Note that we only collect the zero-dimensional topological feature since it is impossible for bipartite graphs to contain a circle. Also, all the zero-dimensional topological features have the same birth, and hence the zero-dimensional PD is equivalent to the empirical distribution of the weights.} (MST). We denote by $D_{\ell}(x_{\ell-1}, F_{\theta})$ the PD of the $\ell$-th activation graph of a neural network $F_{\theta}$ and an input $x_{\ell-1}$ (to the $\ell$-th layer). For more details about this procedure, please refer to \cite{lacombe2021topological, rieck2019neural}.

\begin{figure}[ht]
\centering
\scalebox{0.8}{ 
\begin{tikzpicture}[
    neuron/.style={circle, draw=black!70, thick, minimum size=8mm}, 
    connect/.style={draw=gray!70, thin}, 
    brace/.style={decorate, decoration={brace, amplitude=10pt}, thick, black, align=center}, 
    edge below/.style={preaction={draw=white, -, line width=3pt}} 
]

\def\layersep{2.5cm}

\foreach \i in {1,...,3} {
    \node[neuron] (A\i) at (0, -\i-1) {}; 
}

\foreach \i in {1,...,5} {
    \node[neuron] (B\i) at (\layersep, -\i) {};
}

\foreach \i in {1,...,5} {
    \node[neuron] (C\i) at (2*\layersep, -\i) {};
}

\foreach \i in {1,...,5} {
    \node[neuron] (D\i) at (3*\layersep, -\i) {};
}

\foreach \i in {1,...,3} {
    \node[neuron] (E\i) at (4*\layersep, -\i-1) {}; 
}

\foreach \i in {1,...,3} {
    \foreach \j in {1,...,5} {
        \draw[connect] (A\i) -- (B\j); 
    }
}

\foreach \i in {1,...,5} {
    \foreach \j in {1,...,5} {
        \draw[connect] (B\i) -- (C\j); 
    }
}

\foreach \i in {1,...,5} {
    \foreach \j in {1,...,5} {
        \draw[connect] (C\i) -- (D\j); 
    }
}

\foreach \i in {1,...,5} {
    \foreach \j in {1,...,3} {
        \draw[connect] (D\i) -- (E\j); 
    }
}

\node[below=1.18cm of A3.south] {0-th layer $x_0$}; 
\node[below=0.2cm of B5.south] {1-st layer $x_1$}; 
\node[below=1.18cm of E3.south] {$L$-th layer $x_L$}; 

\draw[brace] ([yshift=1.1cm] A1.north) -- ([yshift=0.1cm] B1.north)
    node[midway, above=5pt, black] {1-st layer of \\ weights / bipartite graph};

\draw[brace] ([yshift=0.1cm] D1.north) -- ([yshift=1.1cm] E1.north)
    node[midway, above=5pt, black] {$L$-th layer of \\ weights / bipartite graph};

\node at ([xshift=1.3cm, yshift=-0.5cm] A2) {{\Large $f_1(x_0)$}};
\node at ([xshift=1.3cm, yshift=-0.5cm] B3) {{\Large $f_2(x_1)$}};
\node at ([xshift=1.3cm, yshift=-0.5cm] D3) {{\Large $f_L(x_{L-1})$}};
\end{tikzpicture}}
\caption{$L$-layer neural network diagram with $L=4$.}\label{fig:network}
\end{figure}
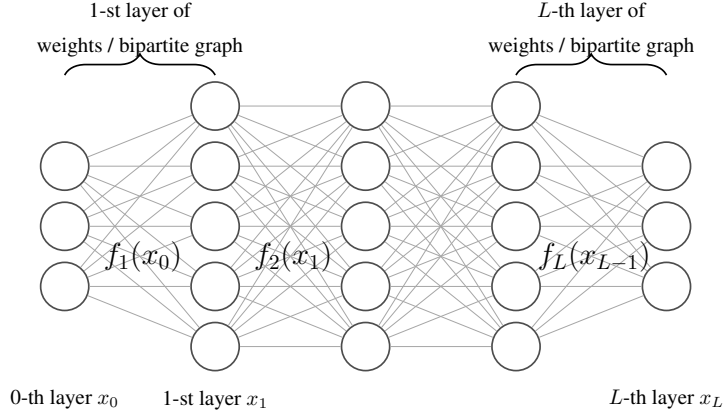

\section{Topological Uncertainty}\label{sec:pTU}

In this section, we present our definitions of topological uncertainty for a Bayesian neural network. We define two notions of topological uncertainty: the \textit{predictive Topological Uncertainty (pTU)} (Definition~\ref{def:pTU}) and the \textit{model Topological Uncertainty (TU)} (Definition~\ref{def:TU}). The main difference between these two notions is that the pTU is defined with respect to a particular input (sample) and a model, whereas the TU is defined with respect to a model. 

\subsection{Predictive Topological Uncertainty}

Suppose $F_{\theta}$ is a neural network trained using the dataset $\mc{D}_{\text{train}}$. We denote the (approximated) posterior distribution of the parameters by $\pi(\theta \mid \mc{D}_{\text{train}})$. Given $x_0 \in \R^{h_0}$ and a parameter $\theta$ drawn from $\pi(\theta \mid \mc{D}_{\text{train}})$, we construct the PDs $D_{\ell}(x_{\ell-1}, F_{\theta})$ where $x_{\ell-1} = f_{\ell-1} \circ \cdots \circ f_{1}(x)$ and $\ell = 1,\ldots, L$. Note the PDs $D_{\ell}(x_{\ell-1}, F_{\theta})$ have inherent randomness from both the posterior distribution of $\theta$ and the sampling distribution of $x$. The definition of pTU is given below.

\begin{definition}[Predictive Topological Uncertainty]\label{def:pTU}
The predictive topological uncertainty (pTU) of an input $x_0 \in \R^{h_0}$ and a (trained) Bayesian neural network $F_{\theta}$ is defined as 
\begin{equation}\label{eq:ptu}
\mathbf{pTU}(x \mid \mc{D}_{\text{train}}) = \frac{1}{L}\sum_{\ell=1}^L\var_{\theta}\left[D_{\ell}(x_{\ell-1}, F_{\theta}) \mid \mc{D}_{\text{train}}\right]
\end{equation}
where $x_{\ell-1} = f_{\ell-1} \circ \cdots \circ f_{1}(x)$ is the input of the $(\ell-1)$-th layer.
\end{definition}

The $\var(\cdot)$ in the Definition~\ref{def:pTU} is the Fr\'{e}chet variance \cite{frechet1948elements} taken with respect to the posterior distribution $\pi(\theta \mid \mc{D}_{\text{train}})$, i.e.,
\begin{equation}\label{eq:var_Dl}
\var_{\theta}\left[D_{\ell}(x_{\ell-1}, F_{\theta}) \mid \mc{D}_{\text{train}}\right] = \inf_{m} \int \text{Dist}(D_{\ell}(x_{\ell-1}, F_{\theta}), m)^2 \pi(\theta \mid \mc{D}_{\text{train}}) d\theta
\end{equation}
where $\text{Dist}(\cdot, \cdot)$ is a distance metric for PDs and the infimum is taken over all possible PDs. In practice, the posterior distribution is often intractable and hence the posterior Fr\'{e}chet variance can only be approximated by
\begin{equation}\label{eq:sample_var_Dl}
\widehat{\var_{\theta}}\left[D_{\ell}(x_{\ell-1}, F_{\theta}) \mid \mc{D}_{\text{train}}\right] = \inf_{m} \sum_{i=1}^k \text{Dist}(D_{\ell}(x_{\ell-1}, F_{\theta_i}), m)^2 
\end{equation}
where $\theta_1, \ldots, \theta_k \iid \pi(\theta \mid \mc{D}_{\text{train}})$. In this work, we choose the 2-Wasserstein distance for the PDs. The main reason for this choice is that, under the 2-Wasserstein distance, Eq.~\eqref{eq:sample_var_Dl} can be efficiently computed\footnote{Since the zero-dim PDs can be viewed as one-dimensional empirical distributions, the Wasserstein distance has a closed form expression and so do the Wasserstein barycenter and variance.} as described in Algorithm~\ref{alg:pTU}. 

\begin{algorithm}[ht]
  \caption{Predictive Topological Uncertainty}
  \label{alg:pTU}
  \begin{algorithmic}[1]
    \Function{pTU}{$x$, $F_{\theta}$}
    \State Sample $\theta_1, \ldots, \theta_m \iid \pi(\theta \mid \mc{D}_{\text{train}})$
    \State $D_{\ell}^{(i)} \coloneq D_{\ell}(x_{\ell-1}, F_{\theta_i}) \quad \text{for all }i=1,\ldots m, \ell=1,\ldots,L$ \Comment{Compute PDs by Section \ref{sec:pd}}
    \For{$\ell = 1, \ldots, L$}
    \State Compute $\bar{D}_{\ell} = \frac{1}{m}\sum_{i=1}^m D_{\ell}^{(i)}$ \Comment{The Fr\'{e}chet mean w.r.t $W_2$}
    \State Compute $\hat{\sigma}^2_{\ell} \coloneqq \frac{1}{m}\sum_{i=1}^m \|D_{\ell}^{(i)} - \bar{D}_{\ell}\|^2$
    \Comment{The Fr\'{e}chet variance w.r.t $W_2$}
    \EndFor
    \State Return $\frac{1}{L}\sum_{\ell=1}^{L} \hat{\sigma}^2_{\ell}$
    \EndFunction
  \end{algorithmic} 
\end{algorithm}

The intuition behind this definition of pTU is that for any Bayesian model, the uncertainty is captured by the posterior variance of the parameters, and the predictive uncertainty is captured by the variance of the posterior predictive distribution. Hence, we use the posterior variance of the PD as a measure for the topological uncertainty.

\subsection{Model Topological Uncertainty}

The pTU is defined based on an input $x$ and a Bayesian neural network $F_{\theta}$ and $\pi(\theta \mid \mc{D}_{\text{train}})$. If we consider $X \sim P$, then $\mathbf{pTU}(X \mid \mc{D}_{\text{train}})$ is a positive-valued random variable. The distribution of pTU is determined by both the posterior distribution of $\theta$ and the sampling distribution $P$. Hence, we can define the model topological uncertainty by taking the expectation of pTU with respect to the sampling distribution $P$.

\begin{definition}[Model Topological Uncertainty]\label{def:TU}
The model topological uncertainty is defined as 
\begin{equation}\label{eq:TU}
\mathbf{TU}(P \mid \mc{D}_{\text{train}}) = \mathbb{E}_P\left[\mathbf{pTU}(X \mid \mc{D}_{\text{train}})\right]
\end{equation}
where the expectation is taken with respect to $X \sim P$.
\end{definition}

The $\mathbf{TU}(P \mid \mc{D}_{\text{train}})$ can be estimated by the empirical mean
\begin{equation}\label{eq:empirical_TU}
    \widehat{\mathbf{TU}}(P \mid \mc{D}_{\text{train}}) = \frac{1}{n}\sum_{i=1}^n \mathbf{pTU}(X_i \mid \mc{D}_{\text{train}})
\end{equation}
where $X_1, \ldots, X_n \iid P$. In the context of machine learning, the distribution $P$ is often referred to as the testing distribution. Therefore, the empirical TU Eq.~\eqref{eq:empirical_TU} is a measure of topological uncertainty of the testing distribution with respect to the model $F_{\theta}$ and the training data $\mc{D}_{\text{train}}$. There are a few remarks regarding pTU and TU: 
\begin{enumerate}
    \item Even if the test data $X_1, \ldots, X_n \iid P$ is identical to the training data, the empirical TU Eq.~\eqref{eq:empirical_TU} would not be zero. This makes sense since the model has limited capacity.
    \item In the context of Bayesian inference, the training data $\mc{D}_{\text{train}}$ are treated as given, and therefore the distribution of training data is not relevant. 
    \item Suppose now we have two distributions $P_1$ and $P_2$. The similarity of the pTU's distributions indicates that the network topology activated by $P_1$ and $P_2$.
    \item However, if the distributions of pTUs are diverging, it suggests that the two distributions $P_1$ and $P_2$ are different. A formal mathematical result is provided in the subsequent section.
\end{enumerate}
Based on these observations, we propose to use pTU and TU for out-of-distribution (OOD) detection, see Section~\ref{sec:OOD_test}.


\subsection{Stability of pTU under the Wasserstein distance}

The computation of pTU depends heavily on the distance metrics for PD, for example, the $p$-Wasserstein distance or the bottleneck distance. 
These metrics are motivated by stability theorems. In topological data analysis, the stability theorems \cite{Chazal2016,skraba2020wasserstein, lacombe2021topological} are essential for preserving the robustness of persistent diagrams, which allow us to reliably capture the underlying structures in data. In our work, the $\mathbf{pTU}(X \mid \mc{D}_{\text{train}})$ can be viewed as a function of the random variable $X \sim P$. As our main application for pTU is the OOD detection, the stability, or Lipschitzness, of the mapping $x \mapsto \mathbf{pTU}(x \mid \mc{D}_{\text{train}})$ is important, serving as the theoretical basis for the significance test for OOD in Section~\ref{sec:OOD_test}. Therefore, we provide Theorem~\ref{thm:stab_thm} to show the stability property of pTU.

Before stating the theorem, we recall that, for two distributions $\mu$ and $\nu$, the $p$-Wasserstein distance is defined by
\begin{equation*}
W_p(\mu, \nu)=\inf _{\gamma \in \Gamma(\mu, \nu)}\left(\mathbb{E}_{(x, y) \sim \gamma} c(x, y)^p\right)^{1 / p}
\end{equation*}
where $\Gamma(\mu, \nu)$ is the set of all couplings of $\mu$ and $\nu$, and $c(\cdot, \cdot)$ is the cost function between $x$ and $y$. The Wasserstein distance is motivated by the optimal transport problem, and the cost function $c(\cdot, \cdot)$ can be chosen according to the specific application. One common choice is an appropriate distance metric defined on the support of $\gamma$. In the following analysis, we use the $L_{\infty}$ norm as the cost function, i.e., $c(x, y) = \|x - y\|_{\infty}$.

We make a few assumptions for the Bayesian neural network $F_{\theta}$ and the training data $\mc{D}_{\text{train}}$:
\begin{enumerate}[label=(\textbf{A.\arabic*})]
    \item The activation function $\sigma$ is 1-Lipschitz.
    \item The posterior distribution $\pi(\theta \mid \mc{D}_{\text{train}})$ is asymptotically normal.
    \item The training dataset $\mc{D}_{\text{train}}$ is in a bounded domain.
\end{enumerate}

These assumptions are easy to satisfy in practice. For example, the commonly used activation functions $\text{ReLU}(x) = \max(x, 0)$ and sigmoid $\sigma(x) = (1+\exp(-x))^{-1}$ are 1-Lipschitz. The posterior normality assumption follows from the Bernstein-von Mises Theorem, see \cite{liu2021variable} for example.

\begin{theorem}[Stability of pTU]\label{thm:stab_thm}
Suppose that there are random variables $X \sim P_X$ and $Y \sim P_Y$. Under the assumptions (\textbf{A.1}) -- (\textbf{A.3}), the distribution of $\mathbf{pTU}$ is stable with respect to the $p$-Wasserstein distance for $p \geq 1$, i.e.,
\begin{equation*}
W_p(\mathbf{pTU}(X \mid \mc{D}_{\text{train}}), \mathbf{pTU}(Y \mid \mc{D}_{\text{train}})) \leq C W_p(X, Y).
\end{equation*}
\end{theorem}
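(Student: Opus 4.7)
The overall plan is to reduce the claim about Wasserstein distances to a deterministic Lipschitz property of the map $\phi: x \mapsto \mathbf{pTU}(x \mid \mc{D}_{\text{train}})$ from $\R^{h_0}$ into $\R_{\geq 0}$. If I can show that $|\phi(x) - \phi(y)| \leq C\|x-y\|_\infty$ for some constant $C$ depending only on $\mc{D}_{\text{train}}$ and the posterior, then for any coupling $\gamma$ of $P_X$ and $P_Y$, the pair $(\phi(X), \phi(Y))$ is a coupling of the pushforwards, and $\E_\gamma[|\phi(X)-\phi(Y)|^p]^{1/p} \leq C\,\E_\gamma[\|X-Y\|_\infty^p]^{1/p}$. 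Infimising over couplings immediately yields the desired inequality, so the entire task reduces to a Lipschitz analysis of $\phi$.

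To establish this, I would decompose $\phi$ layer-by-layer. By assumption (\textbf{A.1}), each $f_\ell$ is $\|W_\ell\|_{\text{op}}$-Lipschitz, so the forward propagation $x \mapsto x_{\ell-1}$ is Lipschitz with constant $\prod_{k=1}^{\ell-1}\|W_k\|_{\text{op}}$. The edge weights of the activation graph, $|W_\ell(j,i)x_{\ell-1}(i)|$, are in turn Lipschitz in $x$ with a constant polynomial in $\{\|W_k\|_{\text{op}}\}_{k\leq\ell}$. Standard persistent-homology stability (the MST-based Wasserstein stability of \cite{Chazal2016,skraba2020wasserstein} cited in the manuscript) then gives, for some $\theta$-dependent constant $K_\ell(\theta)$,
\[
W_2(D_\ell(x_{\ell-1}, F_\theta), D_\ell(y_{\ell-1}, F_\theta)) \leq K_\ell(\theta)\,\|x-y\|_\infty.
\]

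The central technical step is transferring this pointwise-in-$\theta$ Lipschitz bound to the Fr\'echet variance. Write $\mu_X(\theta) := D_\ell(x_{\ell-1}, F_\theta)$, and let $m^*_X, m^*_Y$ denote the corresponding Fr\'echet barycenters. Substituting $m^*_Y$ as a candidate barycenter for $\mu_X$ and applying the triangle inequality inside $L^2(\pi)$ yields
\[
\sqrt{\var_\theta[\mu_X]} \;\leq\; \sqrt{\E_\theta[W_2(\mu_X,\mu_Y)^2]} + \sqrt{\var_\theta[\mu_Y]}.
\]
Symmetrising gives $\bigl|\sqrt{\var_\theta[\mu_X]} - \sqrt{\var_\theta[\mu_Y]}\bigr| \leq \sqrt{\E_\theta[K_\ell(\theta)^2]}\cdot\|x-y\|_\infty$. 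The factorisation $|a^2-b^2|=|a-b|(a+b)$, combined with a uniform upper bound on $\var_\theta[\mu_X]$, then upgrades Lipschitzness of $\sqrt{\var}$ to Lipschitzness of $\var$ itself; averaging over $\ell$ gives Lipschitzness of $\phi$ and closes the argument.

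The hardest part, I expect, will be controlling the $\theta$-dependent constants. Specifically, one needs $\E_\theta[K_\ell(\theta)^2]<\infty$ and a uniform-in-$x$ bound on $\var_\theta[\mu_X]$. Assumption (\textbf{A.2}) supplies sub-Gaussian tails for the posterior-sampled weights, which makes products of operator norms across layers integrable to any polynomial order, while (\textbf{A.3}) anchors the posterior mode in a bounded set so that the resulting moment bounds are explicit. Turning these qualitative ingredients into a clean constant $C$, and verifying that the MST-based PH stability applies in the exact form needed for the weighted bipartite activation graph defining $D_\ell$, is where I expect most of the bookkeeping to concentrate.
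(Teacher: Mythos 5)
Your proposal is correct and follows the same overall architecture as the paper's proof: reduce the $W_p$ bound to a deterministic Lipschitz estimate for $x \mapsto \mathbf{pTU}(x \mid \mc{D}_{\text{train}})$ (the paper's Lemma~\ref{lem:lip}), obtain that estimate from layer-wise Lipschitz propagation plus diagram stability (the paper's Proposition~\ref{rem:fact} and Lemma~\ref{lem:stab_pd}, which use the bottleneck rather than the $W_2$ stability you invoke), and finish with the coupling/pushforward argument. The one step where you genuinely diverge is the transfer from diagram stability to the Fr\'echet variance: you use the abstract variance triangle inequality, bounding $\sqrt{\var_\theta[\mu_X]}$ by substituting the competitor barycenter $m^*_Y$ and applying Minkowski in $L^2(\pi)$, which works in any metric space of diagrams; the paper instead exploits that zero-dimensional diagrams of the bipartite activation graph are ordered vectors in $\R^N$, so $W_2$ and the Fr\'echet mean have closed forms, and it expands the variance into a second moment minus a squared mean and bounds each difference via $|a^2-b^2|=|a-b|(a+b)$ together with the uniform diagram bound $\|D_\ell\|_\infty \leq C_\ell\mc{B}_{\ell-1}\|x_0\|_\infty$. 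Your route is cleaner and more general (it would survive a change of representation where no closed form exists), at the cost of needing the same uniform variance bound to pass from $\sqrt{\var}$ to $\var$. One further point in your favor: you explicitly flag that the Lipschitz constants $K_\ell(\theta)$ depend on the sampled weights and must be shown integrable under the posterior (via (\textbf{A.2})); the paper's proof treats $A_\ell$, $B_\ell$, $C_\ell$ as deterministic and pulls them outside the integral over $\pi(\theta \mid \mc{D}_{\text{train}})$ without comment, so your bookkeeping concern identifies a real gap in rigor in the published argument rather than in yours.
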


Note that the Lipschitz constant $C$ depends on the network architecture, weights, and the radius of the input domain. The proof of Theorem~\ref{thm:stab_thm} is provided in Section~\ref{sec:proof}. We shall mention that the Lipchitz constant might not be optimal. However, these assumptions are typically easy to satisfy in practice.

\section{Significance Test for Out-of-Distribution}\label{sec:OOD_test}

In this section, we propose a significance test for OOD detection based on pTU. Given a model $F_{\theta}$ trained on the dataset $\mc{D}_{\text{train}}$, we want to perform a statistical test for two random variables $X \sim P_1$ and $X^{\prime} \sim P_2$ to test 
\begin{equation}\label{eq:hypothesis}
H_0: P_1 = P_2 \quad \text{v.s.} \quad H_1: P_1 \neq P_2.
\end{equation}
This is a two-sample problem, and numerous research works on this problem, for example, the kernel two-sample test based on maximum mean discrepancy \cite{gretton2012kernel} and Stein discrepancy \cite{liu2016kernelized}. However, the standard two-sample problem does not take into account the interaction between the model and the sample. Therefore, we propose to use 
\[
W_p(\mathbf{pTU}(X \mid \mc{D}_{\text{train}}), \mathbf{pTU}(X^{\prime} \mid \mc{D}_{\text{train}}))
\]
as the test statistic for testing Eq.~\eqref{eq:hypothesis}. 

In practice, we approximate the pTU distribution using the dataset $\{X_i\}_{i=1}^m$. From Algorithm~\ref{alg:pTU}, we obtain the empirical distribution $\{\widehat{\textbf{pTU}}(X_i \mid \mathcal{D}_{\text{train}})\}_{i=1}^n$, which allows us to directly compute the discrete Wasserstein distance between two distributions. That is, if $P$ and $Q$ are two empirical distributions of samples $Y_1,\ldots, Y_n$ and $Z_1,\ldots, Z_n$ respectively, then the discrete $p$-Wasserstein distance is computed based on the order statistics:
\begin{equation*}
W_p(P, Q):=\left(\sum_{i=1}^n\left\|Y_{(i)}-Z_{(i)}\right\|^p\right)^{1 / p}\,.
\end{equation*}

If we collect two datasets $X_1, \ldots, X_m \iid P_1$ and $X^{\prime}_1, \ldots, X^{\prime}_m \iid P_2$, we can compute
$\{\widehat{\mathbf{pTU}}(X_i \mid \mcd{train})\}_{i=1}^m$ and $\{\widehat{\mathbf{pTU}}(X^{\prime}_i \mid \mcd{train})\}_{i=1}^m$. Then we use a permutation test with the test statistic being the discrete $p$-Wasserstein distance to test \eqref{eq:hypothesis}. The procedure is described in Algorithm~\ref{alg:pTU_test}. If the resulting $p$-value is less than $\alpha = 0.05$, we reject the null hypothesis ($H_0$), concluding that $P_1 \neq P_2$.

\begin{algorithm}
    \caption{Two-sample pTU Test}
    \label{alg:pTU_test}
    \begin{algorithmic}[1]
    \Require $X_1, \ldots, X_m \iid P_1$, $X^{\prime}_1, \ldots, X^{\prime}_m \iid P_2$, $M \in \mathbb{N}$
    \State $\mathtt{X}=\mathbf{pTU}(X_i \mid \mc{D}_{\text{train}})$ and $\mathtt{Y}=\mathbf{pTU}(X^{\prime}_i \mid \mc{D}_{\text{train}})$
    \State $\mathtt{combined\_data} = \mathsc{concatenate}(\mathtt{X}, \mathtt{Y})$
    \State $T_{\text{obs}} = W_p(\mathtt{X}, \mathtt{Y})$ 
        \For{$i = 1, \ldots, M$}
        \State $\mathtt{new\_X} = \mathtt{shuffled\_data}[:\mathsc{len}(\mathtt{X})]$
        \State $\mathtt{new\_Y} = \mathtt{shuffled\_data}[\mathsc{len}(\mathtt{X}):]$
        \State $T^{*}_i = W_p(\mathtt{new\_X}, \mathtt{new\_Y})$
        \EndFor
    \State $p\mathrm{-value}=\frac{1}{M}\sum_{i=1}^M \mathbb{I}(T^{*}_i \geq T_{\text{obs}})$
    \end{algorithmic} 
\end{algorithm}

\section{Experiments}\label{sec:experiment}
In this section, we demonstrate the application of pTU on image classification tasks. 
\paragraph{Datasets and preprocessing.} We utilize publicly available image datasets with 10 classes, namely \texttt{SVHN}, \texttt{MNIST}, \texttt{Fashion-MNIST}, and \texttt{CIFAR-10}. Please see Appendix for more details. All images are resized to $28\times 28\times 1$, converted to grayscale, and rescaled from a range of $[0, 255]$ to $[0, 1]$. When we feed the images into our model, we reshape them to $784 \times 1$.

\paragraph{Bayesian models setting.} In these experiments, we use a Bayesian model based on the work of \cite{blundell2015weight}, which is known as \textit{Bayes-by-Backprop} algorithm. We leverage \textit{Bayes-by-Backprop} to train the model. For each layer, the prior distribution is set as a mixture of Gaussians, described in \cite{blundell2015weight}, with $\sigma_1=1$ and $\sigma_2=0.007$. The network architecture is implemented as multi-layer perceptron (MLP) with an input layer of 784 neurons, two hidden layers of rectified linear neurons, and a softmax output layer with 10 neurons. For each input, we use Monte Carlo sampling to draw two parameters from posterior $\pi(\theta|\mathcal{D}_{\text{train}})$ and ensemble these two models. That is, we combine the predictions of both models to make a final decision, by averaging their outputs.


\subsection{Sensitivity to Network Architecture}\label{sec:arch}
The following experiment shows the behavior of pTU from models with different number of hidden layers. To show the relationship between pTU and the model architecture, we train models with different numbers of layers on the \texttt{Fashion MNIST} dataset. We have a total of 7 models, with the number of layers ranging from 2 to 8, and the total number of neurons being 874. These models are denoted as M3, M4, ..., M9. We use Algorithm~\ref{alg:pTU} and set the number of samples to 30. In this subsection, we focus not only on pTU but also the variance $\var(D_\ell|x_i,\mathcal{D}_{\mathtt{F-MNIST}})$ in each layer, which corresponds to $\hat{\sigma}^2_\ell$ in the algorithm for $\ell=1,\cdots,L$. The distributions of these statistics are shown in Figure \ref{fig:arch_layer_ptu}.

We can see that for each model, the distribution of empirical variances, $\hat{\sigma}_L^2$, in the last layer is wider than in the other layers. As the layer depth increases, more nonlinear functions are composited, which allows the model to capture more complex or abstract features. This leads to a wider distribution of $\mathbf{pTU}(X|\mathcal{D}_{\texttt{F-MNIST}})$. This indicates that the model is more flexible and capable of carrying more information. 

\begin{figure}[!ht]
\begin{center}
\includegraphics[width=0.95\textwidth]{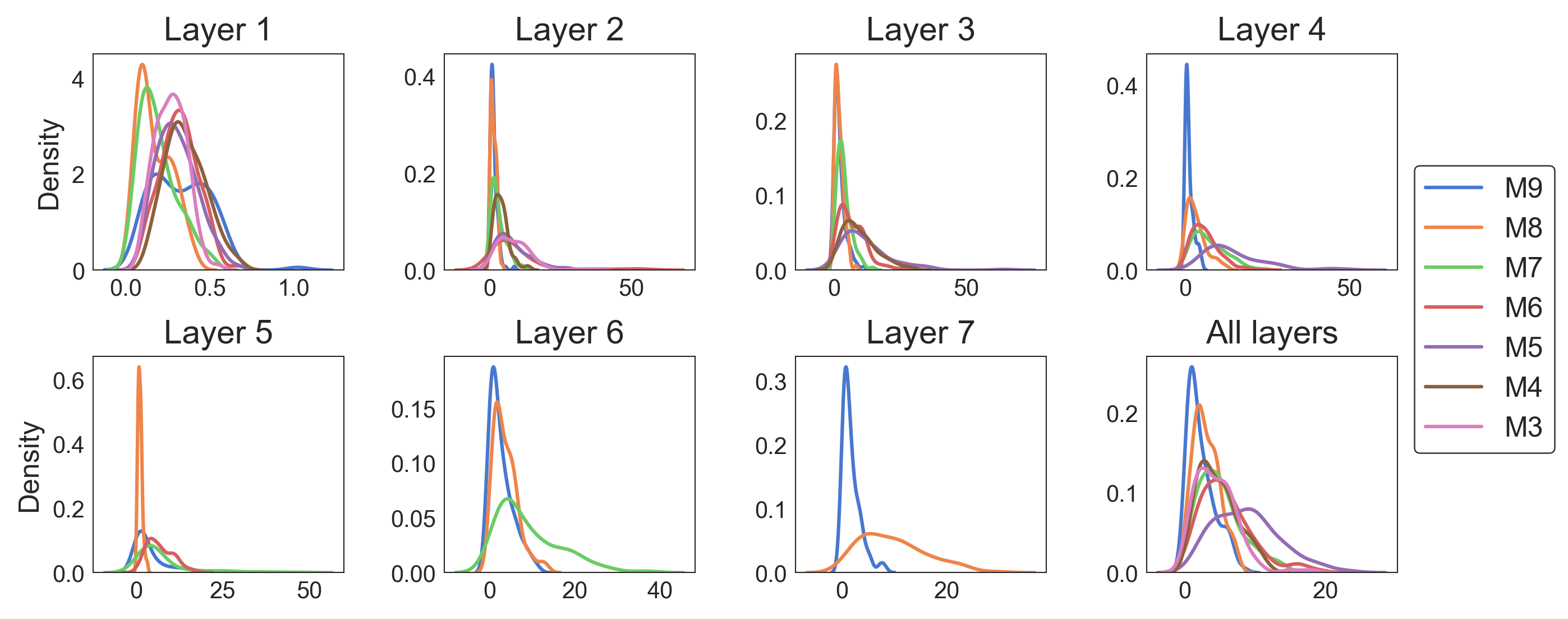}
\end{center}
\caption{From left to right, top to bottom, the seven images: the distribution of empirical variance $\{\hat{\sigma}^2_\ell\}_i$ for $\ell=1,\cdots,7$. Each color of line stands for one model. The eighth image: the distribution of $\{\widehat{pTU}(x_i|\mathcal{D}_{\mathtt{F-MNIST}})\}_i$. }
\label{fig:arch_layer_ptu}
\end{figure}



\subsection{Detection of out-of-distribution samples}\label{sec:ood_exp}
The following experiment demonstrates how the distribution of pTU can be used to determine if a trained network faces an out-of-distribution (OOD) dataset. To illustrate this approach, we train three models on \texttt{SVHN}, \texttt{MNIST}, and \texttt{Fashion-MNIST} individually, and then use \texttt{SVHN}, \texttt{MNIST}, \texttt{Fashion-MNIST}, and \texttt{CIFAR-10} to assess whether they are OOD. Let $P(X)$ be the underlying input distribution of training dataset mentioned above and $P(X')$ be the underlying input distribution of new dataset mentioned above. These two distributions may or may not be identical. Let $\widehat{pTU}(x_i|\mathcal{D}_{\text{train}})$, $\widehat{pTU}(x'_i|\mathcal{D}_{\text{train}})$ be the realization of $\mathbf{pTU}(X|\mathcal{D}_{\text{train}})$, $\mathbf{pTU}(X'|\mathcal{D}_{\text{train}})$ where $X\sim P(X)$ and $X'\sim P(X')$, obtained by Algorithm~\ref{alg:pTU} with $m=30$. Given two sets $\{\widehat{pTU}(x_i|\mathcal{D}_{\text{train}})\}_{x_i\in\mathcal{X}}$ and $\{\widehat{pTU}(x'_i|\mathcal{D}_{\text{train}})\}_{x_i\in\mathcal{X'}}$, Algorithm~\ref{alg:pTU_test} is applied to test whether new dataset $\mathcal{X'}$ is OOD. This allows us to use the $p$-value to distinguish two empirical distributions of pTU following the same distribution or not. We show the results in Table~\ref{tab:ood_pval}. If a significance level of 0.05 is chosen, we can observe that the OOD datasets are correctly identified except \texttt{CIFAR} with respect to \texttt{SVHN}. According to the test accuracy in Table~\ref{tab:acc_set}, we can see that the model's performance on SVHN is not ideal, with an accuracy of only 70.36\%. It can be inferred that it did not ``activate'' enough neurons, resulting in an insignificant difference in pTU distributions. 

\begin{table}[h]
\caption{The $p$-values from the permutation test comparing the two distributions are listed in this table. The significance levels are indicated by stars: * $p < 0.05$, ** $p < 0.01$, *** $p < 0.001$.}
\begin{center}
\begin{tabular}
{
>{\centering\arraybackslash}m{2.7cm} ||
>{\centering\arraybackslash}m{2.2cm}
>{\centering\arraybackslash}m{2.2cm}
>{\centering\arraybackslash}m{2.2cm}
>{\centering\arraybackslash}m{2.2cm}
>{\centering\arraybackslash}m{2.2cm}
}
\noalign{\hrule height 1.5pt}
$p$-value  & \multicolumn{4}{c}{Test data $\mathcal{X}'$ }\\\cline{1-1}
Training data $\mathcal{X}$  & \texttt{SVHN} & \texttt{MNIST} & \texttt{F-MNIST} & \texttt{CIFAR}  \\
\noalign{\hrule height 1.5pt}
\texttt{SVHN} & 0.976   &0.001***    &0.012**  &0.026*\\
\texttt{MNIST} & 0.001***  &0.946    &0.001***  &0.001***\\
\texttt{F-MNIST} & 0.001*  &0.001***    &0.053  &0.001***\\
\noalign{\hrule height 1.5pt}
\end{tabular}
\end{center}
\label{tab:ood_pval}
\end{table}



\subsection{Sensitivity to shifts in sample distribution}\label{sec:shift_exp}
In this experiment, we aim to investigate the impact on the distribution of $\mathbf{pTU}(X|\mathcal{D}_{\texttt{F-MNIST}})$, where $X \sim P(X)$, when the dataset undergoes perturbations from the original dataset. In particular, we consider a family of shifts $\{s_\mu\}_\mu$ defined by $s_\mu(x)=x+\mathcal{N}(\mu\mathbf{1}_{784},1/9I_{784\times 784})$ where $x\in\mathbb
R^{784}$ and $\mathbf{1}_{784}$ is a vector of size 784 with all elements equal to 1. The perturbed dataset is obtained by first drawing $\mathcal{X}=\{x_i\}_{i=1}^n$ from the original dataset \texttt{Fashion-MNIST}, and then use the shift operator to gradually move the distribution away from $\mathcal{X}$, i.e. $\{s_\mu(x_i)\}_{i=1}^n$ for $\mu\in [0, 0.5]$. Then, by Algorithm~\ref{alg:pTU} with $m=30$, we can obtain $\{\widehat{pTU}(s_\mu(x_i)|\mathcal{D}_{\mathtt{F-MNIST}})\}_{x_i\in\mathcal{X}}$ and we can compare the difference of the empirical distribution of $\widehat{pTU}(s_0(X)|\mathcal{D}_{\mathtt{F-MNIST}})$ and $\widehat{pTU}(s_\mu(X)|\mathcal{D}_{\mathtt{F-MNIST}})$ where $\mu\in(0,0.5]$ by permutation test. If the $p$-value is smaller than 0.05, then we will reject $H_0$, and accept $\{s_\mu(x_i)\}$ is OOD. The process is repeated 100 times and the result is shown in the blue curve in Figure~\ref{fig:shift_power}, with each cyan point representing the proportion of rejecting $H_0$ under the corresponding shift $\mu$. Besides, the standard deviation is given by $\sqrt{\hat{p}(1-\hat{p})/n}$ where $n=100$. The curves represent the corresponding moving averages of each experiment. As expected, when the shift $\mu$ increases, the probability of rejecting $H_0$ also increases.

\subsection{Robustness Evaluation}\label{sec:robust_exp}
For this experiment, we aim to investigate the relation between model robustness and the distribution of pTU. By augmenting the training data, we can increase the robustness of the model. Therefore, we train five models on the \texttt{Fashion-MNIST} dataset: one without any data augmentation, and the others with varying levels of augmentation. We add four different levels of Gaussian noise to generate augmented data: from weak noise to strong noise, represented as $N(0.05, 1/9)$, $N(0.1, 1/9)$, $N(0.2, 1/9)$ and $N(0.3, 1/9)$. These training datasets are denoted as $\mathcal{D}_{\mathtt{F-MNIST}}^{05}$, $\mathcal{D}_{\mathtt{F-MNIST}}^{1}$, $\mathcal{D}_{\mathtt{F-MNIST}}^{2}$, and $\mathcal{D}_{\mathtt{F-MNIST}}^{3}$ respectively. We compare the statistical power (probability of rejecting $H_0$) to evaluate the robustness of model for $\mu=[0,0.4]$.

We apply the same method as in Section~\ref{sec:shift_exp} to distinguish between the distribution of pTU trained with \texttt{F-MNIST} and with augmented \texttt{F-MNIST}. The results are shown in Figure~\ref{fig:shift_power}. We observe that adding a soft noise during training helps improve the model's robustness. However, adding strong noise may have an adverse effect and reduce performance. Therefore, we can conclude that after adding suitable augmentation, the model learns the underlying essence of the dataset rather than the superficial perturbations.





\begin{figure}[ht]
\begin{center}
\includegraphics[width=0.45\textwidth]{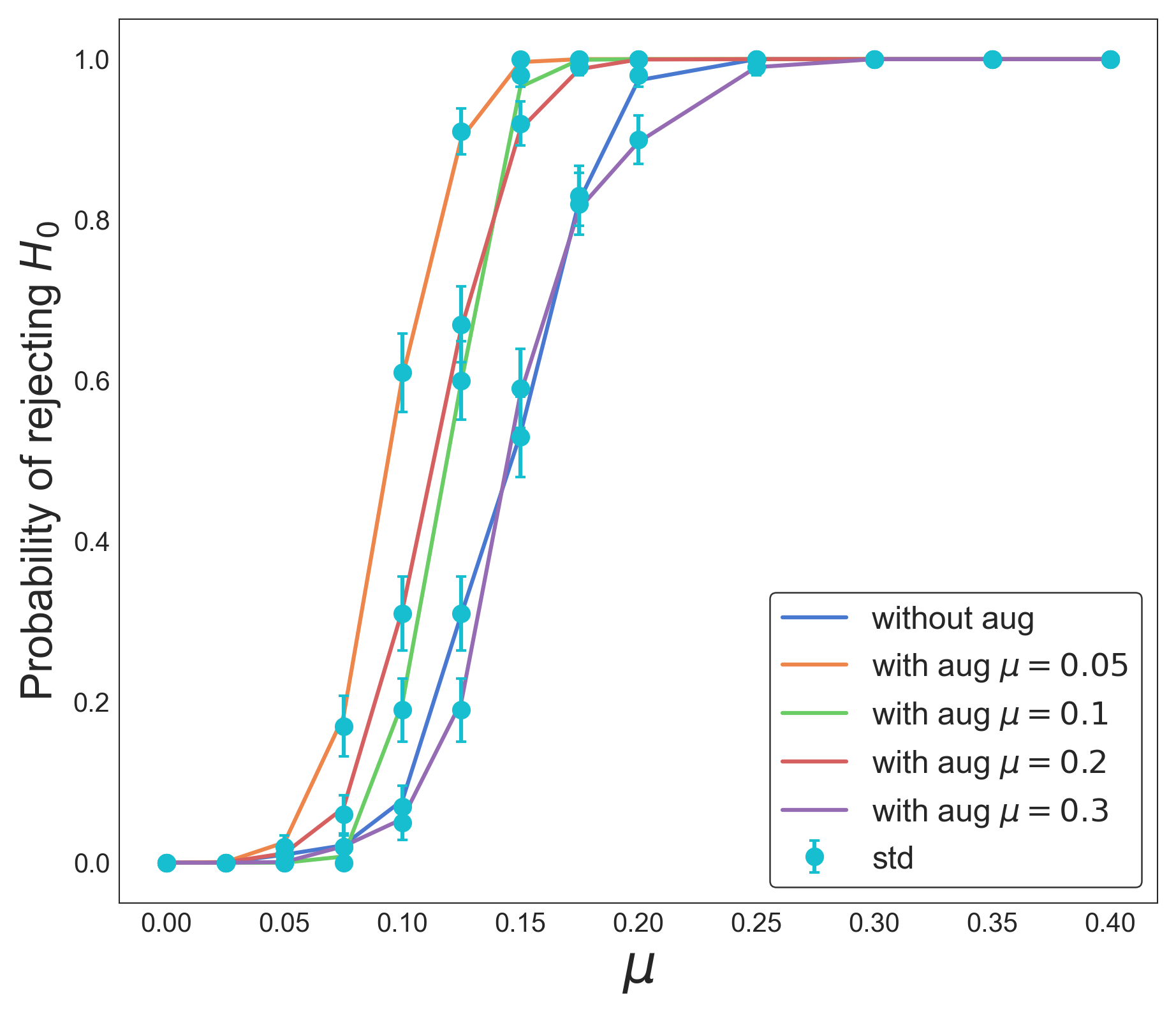}
\end{center}
\caption{Power curve for testing OOD. For a model trained by given training dataset $\mathcal{D}_{\texttt{F-MNIST}}^{\bullet}$, each curve stands for the probability of rejecting $H_0$ for $\mu=[0,0.4]$. Blue: the model is trained by non-augmentation dataset $\mathcal{D}_{\texttt{F-MNIST}}$. Orange, green, red, purple: the model is trained by augmentation dataset $\mathcal{D}_{\texttt{F-MNIST}}^{05}$, $\mathcal{D}_{\texttt{F-MNIST}}^{1}$, $\mathcal{D}_{\texttt{F-MNIST}}^{2}$, $\mathcal{D}_{\texttt{F-MNIST}}^{3}$, respectively.}
\label{fig:shift_power}
\end{figure}

\section{Proof of Theorem \ref{thm:stab_thm}}\label{sec:proof}

Before we prove Theorem \ref{thm:stab_thm}, we introduce some assumptions on network. Consider an $\ell$-th layer and the layer takes $x_{\ell-1}$ as input. Let $h_{\ell}$ and $h_{\ell-1}$ be number of nodes in the $\ell$-th layer and the $(\ell-1)$-th layer, respectively. Define the constant
\begin{equation}\label{eq:al}
A_{\ell}=\sup _{1\leq j\leq h_{\ell}} \sum_{i=1}^{h_{\ell-1}}\left|W_\ell(j,i)\right| .
\end{equation}
for $\ell=1,\cdots,L$. Then, if the activation map $\sigma_\ell$ is 1-Lipschitz, it is obvious that the transformation $f_\ell$ is $A_\ell$-Lipschitz in $\infty$-norm. On the other hand, we assume the transformation $f_\ell$ is bounded by $B_\ell$ in ($\infty$,$\infty$)-norm. That is, $\|f_\ell(x_{\ell-1})\|_\infty\leq B_\ell\|x_{\ell-1}\|_\infty$ for all $x_{\ell-1}\in\mathbb{R}^{h_{\ell-1}}$, $1\leq\ell\leq L$. Note that if $b_\ell=0$, then $B_\ell=A_\ell$.

Now, we can organize the following proposition.
\begin{proposition}\label{rem:fact}
For $x_{\ell-1},y_{\ell-1}\in\mathbb{R}^{h_{\ell-1}}$, there are some facts.
\begin{enumerate}
\item By Lipschitz property, we have $\|f_\ell(x_{\ell-1})-f_\ell(y_{\ell-1})\|_\infty<A_\ell\|x_{\ell-1}-y_{\ell-1}\|_\infty$. We further have
\begin{equation*}
\|f_\ell(x_{\ell-1})-f_\ell(y_{\ell-1})\|_\infty<\mathcal{A}_\ell\|x_0-y_0\|_\infty
\end{equation*}
where we denote $\mathcal{A}_\ell:=\left(\prod_{k=1}^\ell A_k\right)$.
\item By bounded transformation property, we have
\begin{equation*}
\|x_{\ell}\|_\infty=\|f_\ell(x_{\ell-1})\|_\infty\leq \mathcal{B}_\ell\|x_0\|_\infty
\end{equation*}
where denote $\mathcal{B}_\ell:=\left(\prod_{k=1}^\ell B_k\right)$\,.
\item Let $C_\ell:=\sup_{1\leq i\leq h_{\ell-1},1\leq j\leq h_\ell}|W_\ell(j,i)|$. Then, the persistent diagram is bounded by
\begin{equation*}
\|D_\ell(x_{\ell-1},F_\theta)\|_\infty\leq \mathcal{C}_\ell\mathcal{B}_{\ell-1}\|x_0\|_\infty\,.
\end{equation*}
\end{enumerate}
\end{proposition}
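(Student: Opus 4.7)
The plan is to establish the three claims in sequence. Claims (1) and (2) are straightforward inductions on the layer index once the single-layer versions are in place, so the only nontrivial observation is in claim (3), which requires identifying how the $\ell$-th persistence diagram depends on the activation-graph edge weights.

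For claim (1), I would first verify the single-layer Lipschitz bound. Since the activation $\sigma$ acts entrywise and is 1-Lipschitz by assumption (\textbf{A.1}), we have
\begin{equation*}
\|f_\ell(x_{\ell-1}) - f_\ell(y_{\ell-1})\|_\infty \leq \|W_\ell(x_{\ell-1}-y_{\ell-1})\|_\infty.
\end{equation*}
The induced $\infty \to \infty$ operator norm of $W_\ell$ equals the maximum absolute row sum, which is exactly $A_\ell$ by Eq.~\eqref{eq:al}, so $\|W_\ell z\|_\infty \leq A_\ell \|z\|_\infty$ for every $z$. Iterating this bound down the composition $f_\ell \circ \cdots \circ f_1$ gives the compounded constant $\mathcal{A}_\ell = \prod_{k=1}^\ell A_k$ by a trivial induction on $\ell$. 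For claim (2), the per-layer estimate $\|f_\ell(x_{\ell-1})\|_\infty \leq B_\ell \|x_{\ell-1}\|_\infty$ is assumed directly, so $\|x_\ell\|_\infty \leq B_\ell B_{\ell-1} \cdots B_1 \|x_0\|_\infty = \mathcal{B}_\ell \|x_0\|_\infty$ follows by a one-line telescoping induction.

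For claim (3), the key step is to relate the persistence diagram $D_\ell(x_{\ell-1}, F_\theta)$ to the activation-graph edge weights. Recall from Section~\ref{sec:pd} that, since the activation graph is bipartite, the zero-dimensional PD is realized by the empirical distribution of the weights of a maximum spanning tree, so every coordinate of $D_\ell$ is bounded by the maximum edge weight $\max_{i,j} \phi_\ell(i,j) = \max_{i,j} |W_\ell(j,i) \, x_{\ell-1}(i)|$. A direct estimate gives
\begin{equation*}
\max_{i,j} |W_\ell(j,i) \, x_{\ell-1}(i)| \leq C_\ell \|x_{\ell-1}\|_\infty,
\end{equation*}
and combining this with claim (2) applied to $x_{\ell-1}$ delivers the stated $\mathcal{C}_\ell \mathcal{B}_{\ell-1} \|x_0\|_\infty$ bound (with $\mathcal{C}_\ell$ read as the single-layer constant $C_\ell$). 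The main obstacle, modest though it is, lies in this last claim: one must invoke the fact that the PD of a weighted bipartite graph in the prescribed sublevel filtration inherits an $\infty$-norm bound from the maximum edge weight, which rests on the MST description of the zero-dimensional PH recalled in Section~\ref{sec:pd}. Once that structural fact is accepted, every remaining step is a routine norm manipulation.
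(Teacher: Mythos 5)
Your proof is correct and supplies exactly the reasoning the paper leaves implicit: the paper states these three bounds as ``facts'' without proof, and later uses them in the proof of Lemma~\ref{lem:lip} in precisely the form you derive (induced $\infty$-norm of $W_\ell$ as the maximum absolute row sum for (1), telescoping the per-layer bound for (2), and bounding each MST edge weight $|W_\ell(j,i)x_{\ell-1}(i)|$ by $C_\ell\|x_{\ell-1}\|_\infty$ for (3)). You are also right that $\mathcal{C}_\ell$ in the displayed bound should be read as the single-layer constant $C_\ell$, consistent with its use in the subsequent lemma.
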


The stability of persistent diagram is stated as the following.
\begin{lemma}[Appendix A.2 in \cite{lacombe2021topological}]\label{lem:stab_pd}
Consider a $\ell$-th dense layer and layer inputs $x_\ell$, $y_\ell$. Assume that activation map $\sigma_\ell$ is 1-Lipschitz such that the $f_\ell$ is $A_\ell$-Lipschitz transformation. Then, we have
\begin{equation*}
W_{\infty}\left(D_{\ell}(x_{\ell-1}, F_\theta), D_{\ell}(y_{\ell-1}, F_\theta)\right) \leq A_{\ell}\|x_{\ell-1}-y_{\ell-1}\|_{\infty}
\end{equation*}
where $W_{\infty}(\cdot,\cdot)$ is $\infty$-Wasserstein distance (bottleneck distance).
\end{lemma}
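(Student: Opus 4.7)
The plan is to decompose the argument into two pieces: (i) an edge-weight Lipschitz estimate that converts an input perturbation into a perturbation of the filtration function on the bipartite activation graph, and (ii) an appeal to a classical persistence-stability theorem to convert that filtration perturbation into a bottleneck-distance bound on PDs. Since $D_\ell$ is obtained from the edge-weight filtration on the weighted bipartite graph whose edge-weight function $\phi_\ell$ assigns $|W_\ell(j,i)\,x_{\ell-1}(i)|$ to edge $(i,j)$, this two-step split isolates the only layer-specific work into step (i).

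For (i), I would argue edge by edge. Using $\bigl||a|-|b|\bigr| \leq |a-b|$ and $|x_{\ell-1}(i) - y_{\ell-1}(i)| \leq \|x_{\ell-1}-y_{\ell-1}\|_\infty$, each edge weight differs by at most
\begin{equation*}
\bigl|\,|W_\ell(j,i)\,x_{\ell-1}(i)| - |W_\ell(j,i)\,y_{\ell-1}(i)|\,\bigr| \leq |W_\ell(j,i)|\,\|x_{\ell-1}-y_{\ell-1}\|_\infty \leq A_\ell\,\|x_{\ell-1}-y_{\ell-1}\|_\infty,
\end{equation*}
the last step using $|W_\ell(j,i)| \leq \sum_{i'}|W_\ell(j,i')| \leq A_\ell$ by the definition \eqref{eq:al}. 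Taking the supremum over $(i,j) \in E_\ell$ gives the uniform bound $\|\phi_\ell^{(x)} - \phi_\ell^{(y)}\|_\infty \leq A_\ell\,\|x_{\ell-1}-y_{\ell-1}\|_\infty$ on the edge-weight functions induced by the two inputs, where I write $\phi_\ell^{(x)}, \phi_\ell^{(y)}$ for the functions attached to $x_{\ell-1}$ and $y_{\ell-1}$ respectively.

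For (ii), I would invoke the classical persistence-stability theorem of Cohen-Steiner, Edelsbrunner and Harer applied to the edge-weight sublevel filtration on the bipartite graph viewed as a fixed 1-dimensional simplicial complex: if two filtration functions agree on the vertex set and differ on the edge set by at most $\delta$ in the sup norm, then the induced PDs satisfy $W_\infty(\pd(g), \pd(h)) \leq \delta$. Chaining this with the estimate from (i) at $\delta = A_\ell\,\|x_{\ell-1}-y_{\ell-1}\|_\infty$ yields the lemma.

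The main subtlety I expect is fitting the MST-based 0-dim PD of Section~\ref{sec:pd} into this off-the-shelf stability statement, since the paper defines $D_\ell$ through the MST's edge weights rather than directly as the sublevel-set PD. However the two constructions agree for 0-dim persistence: the MST edges are exactly the merge events of connected components under the edge-weight filtration, so their sorted weights coincide with the death times of the 0-dim bars. Once that equivalence is recorded, the off-the-shelf theorem applies verbatim and the proof reduces to the short chain of inequalities above. The hypothesis that $\sigma_\ell$ is 1-Lipschitz plays no role in the lemma per se, but is retained so that Lemma~\ref{lem:stab_pd} composes cleanly with Proposition~\ref{rem:fact}(1) when used downstream to prove Theorem~\ref{thm:stab_thm}.
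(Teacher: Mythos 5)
The paper does not actually prove this lemma --- it is imported verbatim from Appendix~A.2 of the cited reference \cite{lacombe2021topological}, so there is no in-paper argument to compare against. Your proof is correct and follows the standard (and, given the construction in Section~\ref{sec:pd}, essentially inevitable) route: a sup-norm perturbation bound on the edge-weight filtration of the fixed bipartite complex, followed by the Cohen-Steiner--Edelsbrunner--Harer stability theorem, with the MST/zero-dimensional-PD equivalence bridging the two representations. Two small remarks: your chain of inequalities actually yields the sharper constant $C_\ell=\sup_{i,j}|W_\ell(j,i)|$ in place of $A_\ell$ (and since $C_\ell\le A_\ell$ the stated bound follows a fortiori), and your observation that the 1-Lipschitz hypothesis on $\sigma_\ell$ is not used in the lemma itself is accurate, as the activation-graph weights $|W_\ell(j,i)\,x_{\ell-1}(i)|$ do not involve $\sigma_\ell$ at all; that hypothesis only matters downstream when $\|x_{\ell-1}-y_{\ell-1}\|_\infty$ is propagated back to $\|x_0-y_0\|_\infty$.
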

Now we establish the stability of pTU stated by the following lemma.
\begin{lemma}\label{lem:lip}
Consider neural network $F_\theta$ and input $x,y\in\mathcal{D}_{\text{train}}$ which are bounded by $B_0$ in $\infty$-norm. Let activation maps $\sigma_\ell$ be 1-Lipschitz and the constants $\mathcal{A}_\ell,\mathcal{B}_\ell, C_\ell$ be defined in \ref{rem:fact}. Then, we have
\begin{equation}\label{eq:lip_ptu}
\left|\mathbf{pTU}(x| \mc{D}_{\mathrm{train}})-\mathbf{pTU}(y \mid \mc{D}_{\mathrm{train}})\right| \leq c\|x-y\|_\infty.
\end{equation}
where $c$ depends on $\mathcal{A}_\ell,\mathcal{B}_\ell, C_\ell$.
\end{lemma}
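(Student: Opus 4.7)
The plan is to reduce the bound on $|\mathbf{pTU}(x \mid \mc{D}_{\text{train}}) - \mathbf{pTU}(y \mid \mc{D}_{\text{train}})|$ to a layer-by-layer analysis of the Fréchet variances. Writing $V_\ell(x) := \var_\theta[D_\ell(x_{\ell-1}, F_\theta) \mid \mc{D}_{\text{train}}]$, the triangle inequality and \eqref{eq:ptu} immediately give
\begin{equation*}
|\mathbf{pTU}(x) - \mathbf{pTU}(y)| \leq \frac{1}{L}\sum_{\ell=1}^L |V_\ell(x) - V_\ell(y)|,
\end{equation*}
so it suffices to bound each $|V_\ell(x) - V_\ell(y)|$ linearly in $\|x-y\|_\infty$ with a constant depending only on the network architecture.

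The key observation is that the Fréchet standard deviation $\sqrt{V_\ell(\cdot)}$ itself satisfies a triangle inequality. Let $m_x$ denote the $W_2$-barycenter of the random persistence diagram $D_\ell(x_{\ell-1}, F_\theta)$ under $\pi(\cdot \mid \mc{D}_{\text{train}})$. Using $m_x$ as a suboptimal competitor in the definition of $V_\ell(y)$, then applying the $W_2$ triangle inequality pointwise in $\theta$, and finally Minkowski's inequality in $L^2(\pi)$, one obtains
\begin{equation*}
\sqrt{V_\ell(y)} \leq \Delta_\ell(x,y) + \sqrt{V_\ell(x)}, \qquad \Delta_\ell(x,y) := \Big(\int W_2\bigl(D_\ell(y_{\ell-1}, F_\theta), D_\ell(x_{\ell-1}, F_\theta)\bigr)^2 \pi(d\theta \mid \mc{D}_{\text{train}})\Big)^{1/2}.
\end{equation*}
Exchanging $x$ and $y$ gives $|\sqrt{V_\ell(x)} - \sqrt{V_\ell(y)}| \leq \Delta_\ell(x,y)$, and hence $|V_\ell(x) - V_\ell(y)| \leq 2 M_\ell \Delta_\ell(x,y)$, where $M_\ell$ is any uniform upper bound on $\sqrt{V_\ell(\cdot)}$ on the input domain.

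What remains is to control the two quantities $\Delta_\ell$ and $M_\ell$. For $\Delta_\ell$, I would first convert the $W_\infty$-stability in Lemma~\ref{lem:stab_pd} to $W_2$ (using that the MST-based PD at layer $\ell$ has at most $h_\ell + h_{\ell-1} - 1$ off-diagonal points), obtaining $W_2(D_\ell(x_{\ell-1}, F_\theta), D_\ell(y_{\ell-1}, F_\theta)) \leq K_\ell A_\ell \|x_{\ell-1} - y_{\ell-1}\|_\infty$ for a dimensional factor $K_\ell$. Propagating back to the input via Proposition~\ref{rem:fact}(1) yields $\|x_{\ell-1} - y_{\ell-1}\|_\infty \leq \mathcal{A}_{\ell-1} \|x-y\|_\infty$, so $\Delta_\ell(x,y) \leq K_\ell \mathcal{A}_\ell \|x-y\|_\infty$. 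For $M_\ell$, Proposition~\ref{rem:fact}(3) together with the bounded-domain assumption (\textbf{A.3}) gives a uniform bound on the support of the PDs, and since the Fréchet variance on a bounded metric space is bounded by the squared diameter, $M_\ell$ is finite and depends polynomially on $\mathcal{B}_{\ell-1}$ and $C_\ell$. Summing over $\ell$ and dividing by $L$ then yields \eqref{eq:lip_ptu} with a constant $c$ polynomial in $\mathcal{A}_\ell$, $\mathcal{B}_\ell$, and $C_\ell$.

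The main obstacle is the Fréchet-variance step: unlike moments in Euclidean space, one cannot directly linearize $V_\ell(x)$ in $x$ because the barycenter $m_x$ itself moves with $x$. The Minkowski trick on $\sqrt{V_\ell}$ bypasses this, but at the cost of picking up the uniform bound $M_\ell$, which is precisely why assumption (\textbf{A.3}) is needed. A secondary technicality is that $A_\ell, B_\ell, C_\ell$ are random under $\pi(\theta \mid \mc{D}_{\text{train}})$; to pass to a deterministic Lipschitz constant one either invokes (\textbf{A.2}) with a high-probability truncation on the weights, or replaces these constants by their $\pi$-essential suprema, absorbed into the final $c$.
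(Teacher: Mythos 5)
Your argument is correct, and it reaches the same two ingredients as the paper (the $W_\infty$-stability of Lemma~\ref{lem:stab_pd} to control the ``difference'' and the boundedness of Proposition~\ref{rem:fact}(3) to control the ``magnitude''), but it assembles them by a genuinely different decomposition. The paper exploits the one-dimensional structure of the zero-dimensional PDs explicitly: it writes the Fr\'echet variance in closed form as $\frac{1}{N}\int\|D_\ell - m\|^2\,\pi(d\theta)$, uses the Euclidean bias--variance identity to split the difference of variances into a second-moment term and a barycenter term, and bounds each via the factorization $a^2-b^2=(a-b)(a+b)$. Your route instead works at the level of the abstract Fr\'echet functional: the competitor-plus-Minkowski argument shows $\bigl|\sqrt{V_\ell(x)}-\sqrt{V_\ell(y)}\bigr|\le\Delta_\ell(x,y)$ without ever needing the closed form of the barycenter, and the boundedness only enters through the crude factor $2M_\ell$. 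What your approach buys is generality --- it would survive replacing the $1$-dimensional $W_2$ geometry by any metric on diagrams for which stability and boundedness hold, and it isolates cleanly why assumption (\textbf{A.3}) is needed (to make $M_\ell$ finite); what the paper's approach buys is slightly more explicit constants, since the two terms of its decomposition are each bounded directly by $2NC_\ell\mathcal{A}_\ell\mathcal{B}_{\ell-1}B_0\|x_{\ell-1}-y_{\ell-1}\|_\infty$. Your closing remark about $A_\ell, B_\ell, C_\ell$ being $\theta$-dependent under the posterior is a real subtlety that the paper's proof silently elides by treating them as fixed; replacing them by $\pi$-essential suprema, as you suggest, is the cleanest repair and applies equally to both arguments.
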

\begin{proof}
First, consider the $\ell$-th layer and the layer input $x_{\ell-1},y_{\ell-1}$. Let $N$ denote the number of nodes in the $\ell$-th bipartite graph, which is given by $h_\ell + h_{\ell-1} - 1 $. Thus, persistence diagram $D_\ell(x_{\ell-1},F_\theta)$ of the $\ell$-th bipartite graph is a ordered vector of size $N$. 

Let $m_{x_{\ell-1}}$ and $m_{y_{\ell-1}}$ denote the Frechet means of the random variables $D_\ell(x_{\ell-1}, F_\theta)$ and $D_\ell(y_{\ell-1}, F_\theta)$, respectively, with $m_{z}$ given by $\int_\Theta D_\ell(z, F_\theta) \pi(\theta \mid \mathcal{D}_\text{train}) \, d\theta$ for $z=x_{\ell-1}, y_{\ell-1}$. Note that the closed form of $p$-Wasserstein distance on $\mathbb{R}$ is well-known. Thus, we can rewrite Definition (\ref{eq:var_Dl}) as
\begin{align*}
&\var_{\theta}\left[D_{\ell}(x_{\ell-1}, F_{\theta}) \mid \mc{D}_{\text{train}}\right] = \inf_{\bar{m}} \int W_2(D_{\ell}(x_{\ell-1}, F_{\theta}), \bar{m})^2 \pi(\theta \mid \mc{D}_{\text{train}}) d\theta\\
=&\int W_2 (D_{\ell}(x_{\ell-1}, F_{\theta}), m_{x_{\ell-1}})^2 \pi(\theta \mid \mc{D}_{\text{train}}) d\theta=\int \frac{1}{N}\|D_{\ell}(x_{\ell-1}, F_{\theta})- m_{x_{\ell-1}}\|^2 \pi(\theta \mid \mc{D}_{\text{train}}) d\theta\,,
\end{align*}
where $\|\cdot\|$ is Euclidean 2-norm. Then, we have
\begin{align}
&N\left|\var_{\theta}\left[D_{\ell}(x_{\ell-1}, F_{\theta})\mid \mc{D}_{\text{train}}\right]-\var_{\theta}\left[D_{\ell}(y_{\ell-1}, F_{\theta})\mid \mc{D}_{\text{train}}\right]\right|\nonumber\\
&=\bigg|\int \|D_{\ell}(x_{\ell-1}, F_{\theta})- m_{x_{\ell-1}}\|^2 \pi(\theta \mid \mc{D}_{\text{train}}) d\theta\notag-\int \|D_{\ell}(y_{\ell-1}, F_{\theta})- m_{y_{\ell-1}}\|^2 \pi(\theta \mid \mc{D}_{\text{train}}) d\theta\bigg|\notag\\
&\begin{aligned}
&\leq\left|\int \left(D_{\ell}(x_{\ell-1}, F_{\theta})^\top D_{\ell}(x_{\ell-1}, F_{\theta})-D_{\ell}(y_{\ell-1}, F_{\theta})^\top D_{\ell}(y_{\ell-1}, F_{\theta})\right)\pi(\theta|\mathcal{D}_\text{train})d\theta\right|\\
&\hspace{1cm}+\left|m_{x_{\ell-1}}^\top m_{x_{\ell-1}}-m_{y_{\ell-1}}^\top m_{y_{\ell-1}}\right|
\end{aligned}\label{eq:diff_ptu}
\end{align}

For the first term in (\ref{eq:diff_ptu}), we have
\begin{align*}
&\left|\int \left(D_{\ell}(x_{\ell-1}, F_{\theta})^\top D_{\ell}(x_{\ell-1}, F_{\theta})-D_{\ell}(y_{\ell-1}, F_{\theta})^\top D_{\ell}(y_{\ell-1}, F_{\theta})\right)\pi(\theta|\mathcal{D}_\text{train})d\theta\right|\\
\leq &\int \big|\|D_{\ell}(x_{\ell-1}, F_{\theta})\|-\|D_{\ell}(y_{\ell-1}, F_{\theta})\|\big|\left(\|D_{\ell}(x_{\ell-1}, F_{\theta})\|+\|D_{\ell}(y_{\ell-1}, F_{\theta})\|\right)\pi(\theta|\mathcal{D}_\text{train})d\theta\\
\leq &\int \|D_{\ell}(x_{\ell-1}, F_{\theta})-D_{\ell}(y_{\ell-1}, F_{\theta})\|\left(\|D_{\ell}(x_{\ell-1}, F_{\theta})\|+\|D_{\ell}(y_{\ell-1}, F_{\theta})\|\right)\pi(\theta|\mathcal{D}_\text{train})d\theta\\
\leq &N\int W_\infty\left(D_{\ell}(x_{\ell-1}, F_{\theta}),\,D_{\ell}(y_{\ell-1}, F_{\theta})\right)\left(C_\ell\mathcal{B}_{\ell-1}\|x_0\|_\infty+C_\ell\mathcal{B}_{\ell-1}\|y_0\|_\infty\right)\pi(\theta|\mathcal{D}_\text{train})d\theta\\
\leq &N\int \mathcal{A}_\ell\left\|x_{\ell-1}-y_{\ell-1}\right\|_\infty\left(C_\ell\mathcal{B}_{\ell-1}\|x_0\|_\infty+(C_\ell\mathcal{B}_{\ell-1}\|y_0\|_\infty\right)\pi(\theta|\mathcal{D}_\text{train})d\theta\\
\leq &2NC_\ell\mathcal{A}_\ell\mathcal{B}_{\ell-1}B_0\left\|x_{\ell-1}-y_{\ell-1}\right\|_\infty
\end{align*}
where the third inequality is due to Proposition \ref{rem:fact} and the last inequality is due to Lemma \ref{lem:stab_pd} and the boundness of $x_0$, $y_0$.

Second, for the second term in (\ref{eq:diff_ptu}), we have
\begin{align*}
&\left|m_{x_{\ell-1}}^\top m_{x_{\ell-1}}-m_{y_{\ell-1}}^\top m_{y_{\ell-1}} \right|\leq\big|\|m_{x_{\ell-1}}\|-\|m_{y_{\ell-1}}\|\big|\left(\|m_{x_{\ell-1}}\|+\|m_{y_{\ell-1}}\|\right)\\
\leq & \int \left\|D_\ell(x_{\ell-1},F_\theta)-D_\ell(y_{\ell-1},F_\theta)\right\|\pi(\theta|\mathcal{D}_{\text{train}})d\theta\left(C_\ell\mathcal{B}_{\ell-1}\|x_0\|_\infty+C_\ell\mathcal{B}_{\ell-1}\|y_0\|_\infty\right)\\
\leq &2NC_\ell\mathcal{A}_\ell\mathcal{B}_{\ell-1}B_0\left\|x_{\ell-1}-y_{\ell-1}\right\|_\infty
\end{align*}
Now, for $\ell$-th layer, we have $(\ref{eq:diff_ptu})\leq 4NC_\ell\mathcal{A}_\ell\mathcal{B}_{\ell-1}B_0\left\|x_{\ell-1}-y_{\ell-1}\right\|_\infty$. By definition of \textbf{pTU} in (\ref{eq:ptu}), for $x,y\in\mathcal{D}_{\text{train}}$, we have
\begin{align*}
&|\mathbf{pTU}(x)-\mathbf{pTU}(y)|\leq \frac{1}{L}\sum^{L}_{\ell=1}\bigg|\var_{\theta}\left[D_{\ell}(x_{\ell-1}, F_{\theta})\mid \mc{D}_{\text{train}}\right]-\var_{\theta}\left[D_{\ell}(y_{\ell-1}, F_{\theta})\mid \mc{D}_{\text{train}}\right]\bigg|\\
&\quad\quad\leq \sum^{L}_{\ell=1}\frac{4C_\ell\mathcal{A}_\ell\mathcal{B}_{\ell-1}B_0}{L}\left\|x_{\ell-1}-y_{\ell-1}\right\|_\infty\\
&\quad\quad\leq \sum^{L}_{\ell=1}\frac{4C_\ell\mathcal{A}_\ell\mathcal{B}_{\ell-1}B_0}{L}\prod^{\ell-1}_{k=1}A_k\|x_0-y_0\|_\infty\leq\sum^{L}_{\ell=1}\frac{4C_\ell\mathcal{A}_\ell\mathcal{A}_{\ell-1}\mathcal{B}_{\ell-1}B_0}{L}\|x-y\|_\infty\,.
\end{align*}
Hence, the constant $c$, defined as $c := \sum_{\ell=1}^{L} \frac{4C_\ell \mathcal{A}_\ell \mathcal{B}_{\ell-1} B_0}{L}$, is the desired constant in \ref{eq:lip_ptu}.
\end{proof}

We restate Theorem \ref{thm:stab_thm} here for convenience.
\begin{theorem}
Consider neural network $F_\theta$. Let random variables $X,Y\sim P(X)$ are bounded. Let activation maps $\sigma_\ell$ be 1-Lipschitz and the constants $\mathcal{A}_\ell,\mathcal{B}_\ell, C_\ell$ be defined in Proposition \ref{rem:fact}. Then, under the normality assumption of posterior, we have
\begin{equation}
W_2(\mathbf{pTU}(X|\mathcal{D}_{\text{train}}), \mathbf{pTU}(Y|\mathcal{D}_{\text{train}}))\leq CW_2(X,Y)
\end{equation}
where $C$ depends on $\mathcal{A}_\ell,\mathcal{B}_\ell, C_\ell$.
\end{theorem}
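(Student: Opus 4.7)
The plan is to upgrade the pointwise Lipschitz bound of Lemma~\ref{lem:lip} into a Wasserstein stability bound via the standard pushforward argument for Lipschitz maps. Define $\phi(x) := \mathbf{pTU}(x \mid \mc{D}_{\text{train}})$. By Lemma~\ref{lem:lip}, the map $\phi: \R^{h_0} \to \R_{\geq 0}$ is $c$-Lipschitz with respect to $\|\cdot\|_\infty$ (and hence, up to a dimensional factor, with respect to any equivalent norm), where $c$ is built from the products $\mathcal{A}_\ell, \mathcal{B}_\ell, C_\ell$ and the radius $B_0$ of the input domain (assumption \textbf{A.3}). The random variables $\mathbf{pTU}(X \mid \mc{D}_{\text{train}})$ and $\mathbf{pTU}(Y \mid \mc{D}_{\text{train}})$ are precisely the pushforwards $\phi_\# P_X$ and $\phi_\# P_Y$.

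Next, I would carry out the pushforward step: take an optimal coupling $\gamma^{\ast} \in \Gamma(P_X, P_Y)$ for $W_p(X,Y)$, and observe that $(\phi \times \phi)_\# \gamma^{\ast}$ is a (not necessarily optimal) coupling of $\phi_\# P_X$ and $\phi_\# P_Y$. By the Lipschitz bound,
\begin{equation*}
\E_{(u,v) \sim (\phi\times\phi)_\# \gamma^{\ast}}\bigl[|u-v|^p\bigr]
= \E_{(x,y) \sim \gamma^{\ast}}\bigl[|\phi(x)-\phi(y)|^p\bigr]
\leq c^{p}\, \E_{(x,y) \sim \gamma^{\ast}}\bigl[\|x-y\|_\infty^{p}\bigr]
= c^{p}\, W_p(X,Y)^p.
\end{equation*}
Taking $p$-th roots and using the definition of $W_p$ as an infimum over couplings yields $W_p(\phi(X), \phi(Y)) \leq c\, W_p(X,Y)$, which is exactly the desired stability inequality with $C = c$.

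The main obstacle is the handling of the Lipschitz constant $c$, which is the point at which assumption \textbf{A.2} (asymptotic posterior normality) really enters. The quantities $A_\ell, B_\ell, C_\ell$ defined via the weights $W_\ell$ are random under the posterior, and in the proof of Lemma~\ref{lem:lip} they were pulled outside the integral against $\pi(\theta \mid \mc{D}_{\text{train}})$. To justify this one should reinterpret $c$ either as an essential supremum or, more usefully, as a finite posterior moment of $\sum_{\ell} C_\ell \mathcal{A}_\ell \mathcal{B}_{\ell-1}$; the Bernstein--von Mises theorem guarantees the posterior has finite moments of all orders, so such a constant is well-defined and finite. Combined with the boundedness of the input domain (\textbf{A.3}) needed to control $\|x_0\|_\infty, \|y_0\|_\infty$ by $B_0$, and the 1-Lipschitz activation (\textbf{A.1}) required for Lemma~\ref{lem:stab_pd}, this delivers a deterministic $C$ depending on the architecture, posterior, and input radius, completing the proof.
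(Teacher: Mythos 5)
Your proposal is correct and follows essentially the same route as the paper: both rest entirely on the pointwise Lipschitz bound of Lemma~\ref{lem:lip} and then transfer it to the Wasserstein distance by evaluating the induced coupling $(\phi\times\phi)_{\#}\gamma$ (the paper uses an $\epsilon$-near-optimal coupling where you invoke an exactly optimal one, an immaterial difference). Your closing remark that the constants $A_\ell,\mathcal{B}_\ell,C_\ell$ are themselves random under the posterior and must be controlled by a supremum or a finite posterior moment is a legitimate point that the paper's own proof silently glosses over.
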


\begin{proof}
By definition, it is equivalent to show
\begin{equation*}
\inf_{\gamma\in \Gamma}\mathbb{E}_{\gamma}\left[\left|\mathbf{pTU}(X| \mc{D}_{\mathrm{train}})-\mathbf{pTU}(Y \mid \mc{D}_{\mathrm{train}})\right|^2\right]\leq c \inf_{\gamma\in \Gamma}\mathbb{E}_{\gamma}\left[\left\|X-Y\right\|_\infty^2\right]\,,
\end{equation*}
where $\Gamma$ is the set of all coupling of $\mathcal{D}_{\text{train}}\times \mathcal{D}_{\text{train}}$. Let $\epsilon>0$. There exists $\bar{\gamma}\in\Gamma$ such that
\begin{equation*}
\mathbb{E}_{\bar{\gamma}}\left[\left\|X-Y\right\|_\infty^2\right]\leq \inf_{\gamma\in \Gamma}\mathbb{E}_{\gamma}\left[\left\|X-Y\right\|_\infty^2\right]+\epsilon\,.
\end{equation*}
By Lemma \ref{lem:lip}, for given $\bar{\gamma}$, we can obtain that
\begin{equation*}
\mathbb{E}_{\bar{\gamma}}\left[\left|\mathbf{pTU}(X| \mc{D}_{\mathrm{train}})-\mathbf{pTU}(Y \mid \mc{D}_{\mathrm{train}})\right|^2\right] \leq c^2\mathbb{E}_{\bar{\gamma}}\left[\|X-Y\|_\infty^2\right].
\end{equation*}
Thus, by putting above two inequalities together, we have
\begin{equation*}
  \mathbb{E}_{\bar{\gamma}}\left[\left|\mathbf{pTU}(X| \mc{D}_{\mathrm{train}})-\mathbf{pTU}(Y \mid \mc{D}_{\mathrm{train}})\right|^2\right] \leq c^2\inf_{\gamma\in \Gamma}\mathbb{E}_{\gamma}\left[\left\|X-Y\right\|_\infty^2\right]+\epsilon\,.
\end{equation*}
On the other hand, to show the existence of $\bar{\gamma}$
\begin{equation*}
\inf_{\gamma\in \Gamma}\mathbb{E}_{\gamma}\left[\left|\mathbf{pTU}(X| \mc{D}_{\mathrm{train}})-\mathbf{pTU}(Y \mid \mc{D}_{\mathrm{train}})\right|^2\right]\leq c^2 \mathbb{E}_{\bar{\gamma}}\left[\left\|X-Y\right\|_\infty^2\right]
\end{equation*}
is straightforward, refer to Lemma \ref{lem:lip}. Therefore, the desired result follows.
\end{proof}
\section{Conclusion}\label{sec:conclusion}
In this work, we generalize the concept of Topological Uncertainty to Bayesian neural networks, leveraging activation graphs to analyze the behavior of the model. Additionally, we show that pTU exhibits a stability property, which allows for distinguishing whether a given dataset is out-of-distribution (OOD). This property not only enhances the understanding of model predictions but also provides a robust indicator for assessing the reliability of the network under varying conditions. Our findings highlight the potential of pTU as a powerful tool for evaluating both OOD and model robustness.


\appendix

\section{Additional Experimental Details}
\subsection{Networks architectures and training}
\paragraph{Architecture} In Section~\ref{sec:experiment}, all networks are implemented as multi-layer perceptrons (MLPs), which input the flattened image as 784-dim input and return probabilities of 10 class. In Section \ref{sec:arch}, the network architectures are as follows: M3 ($784 \to 80 \to 10$), M4 ($784 \to 64 \to 16 \to 10$), M5 ($784 \to 40 \to 20 \to 20 \to 10$), M6 ($784 \to 32 \to 16 \to 16 \to 16 \to 10$), M7 ($784 \to 20 \to 20 \to 16\to 12\to 12 \to 10$), M8 ($784 \to 16\to 16 \to 12\to 12\to 12\to 12 \to 10$), M9 ($784 \to 16\to 12\to 12\to 10\to 10\to 10\to 10 $). All the other models in Section~\ref{sec:ood_exp} - Section~\ref{sec:robust_exp} are 3-layer models with the architecture ($784 \to 64 \to 16 \to 10$).

\paragraph{Loss} Variational learning determines the parameters $\phi$ of a weight distribution $q(\theta|\phi)$ in order to minimize the KL divergence from the actual Bayesian posterior distribution $\pi(\theta|\mathcal{D})$ on weights. Let $\pi(\theta)$ be the prior distribution of the parameters and $P(\mathcal{D}\mid \theta)$ be the likelihood, which corresponds to the cross-entropy or softmax loss. Then, according to \cite{blundell2015weight}, the loss function is designed as
\begin{align}
\mathcal{F}(\mathcal{D}, \theta)=\mathrm{KL}[q(\theta \mid \phi) \| & \pi(\theta)]-\mathbb{E}_{q(\theta \mid \phi)}[\log P(\mathcal{D} \mid \theta)].
\end{align}

\paragraph{Training} We utilized the Adam optimizer for its efficient computation. We train the model without regularization, following the guidelines in \cite{blundell2015weight}. To prevent overfitting and ensure efficient training, we implemented early stopping. We specifically set the patience to 25 epochs. The test accuracies of various architectural models trained on \texttt{F-MNIST} and discussed in Section \ref{sec:arch} are shown in Table \ref{tab:acc_arch}. The test accuracies of 3-layer model trained on various datasets are listed in Table \ref{tab:acc_set}.

\begin{table}[ht]
\caption{The accuracies of models trained on \texttt{F-MNIST}, which is discussed in Section \ref{sec:arch}.}
\begin{center}
\begin{tabular}
{
>{\centering\arraybackslash}m{2.5cm} ||
>{\centering\arraybackslash}m{2.2cm}
>{\centering\arraybackslash}m{2.2cm}
>{\centering\arraybackslash}m{2.5cm}
>{\centering\arraybackslash}m{2.5cm}
}
\noalign{\hrule height 1.5pt}
Model name  & \# of layers & \# of neurals & Test acc \\
\noalign{\hrule height 1.5pt}
M3 & 2 & 874 & 87.56 \%\\
M4 & 3 & 874 & 87.33 \%\\
M5 & 4 & 874 & 86.90 \%\\
M6 & 5 & 874& 86.98 \%\\
M7 & 6 & 874& 86.1 \%\\
M8 & 7 & 874& 85.12 \%\\
M9 & 8 & 874& 83.24 \%\\
\noalign{\hrule height 1.5pt}
\end{tabular}
\end{center}
\label{tab:acc_arch}
\end{table}

\begin{table}[ht]
\caption{The test accuracies of 3-layer model trained on various datasets.}
\begin{center}
\begin{tabular}
{
>{\centering\arraybackslash}m{4.5cm} ||
>{\centering\arraybackslash}m{2.2cm}
}
\noalign{\hrule height 1.5pt}
Training set  & Test acc \\
\noalign{\hrule height 1.5pt}
\texttt{SVHN w/o aug} & 70.36 \%\\
\texttt{MNIST w/o aug} & 97.39 \%\\
\texttt{F-MNIST w/o aug} & 87.33 \%\\
\texttt{F-MNIST w/ aug $\mu=0.05$} & 88.26 \%\\
\texttt{F-MNIST w/ aug $\mu=0.1$} & 88.39 \%\\
\texttt{F-MNIST w/ aug $\mu=0.2$} & 87.24 \%\\
\texttt{F-MNIST w/ aug $\mu=0.3$} & 85.31 \%\\
\noalign{\hrule height 1.5pt}
\end{tabular}
\end{center}
\label{tab:acc_set}
\end{table}

\subsection{Datasets preprocessing}
\paragraph{Datasets description} The \texttt{MNIST} and \texttt{F-MNIST} datasets \cite{LeCun2005, Xiao2017} both include 60,000 training sample and 10,000 test samples. These samples are grayscale images with a resolution of $28\times 28$ pixels, categorized into 10 classes. \texttt{MNIST} represents handwritten digits, while \texttt{F-MNIST} contains images of fashion items.

The \texttt{SVHN} dataset \cite{Netzer2011} consists of 73,257 training sample and 26,032 test samples. These samples are RGB images with dimensions $32 \times 32 \times 3$, representing street view of house numbers. The images are divided into 10 classes, each corresponding to a digit extracted from natural images.

The \texttt{CIFAR10} dataset \cite{Krizhevsky2009} consists of 10,000 test samples. These samples are RGB images with dimensions $32 \times 32 \times 3$, representing nature images. Since the \texttt{CIFAR10} is used here, the images are divided into 10 classes.

\paragraph{Preprocessing} All image pixels are rescaled to [0, 1] from [0, 255], then converted to grayscale and resized to $28\times 28\times 1$. Additionally, images with added noise are clipped to the range [0, 1].

\section{Additional Results}
\subsection{More figures for Section~\ref{sec:ood_exp}}
Figure~\ref{fig:ood_distr} shows the distributions of $\mathbf{pTU}(X|\mathcal{D}_{\text{train}})$ for given training dataset $\mathcal{D}_{\text{train}}$ and given underlying distribution of dataset $X'\sim P(X')$ where $P(X')$ may or may not be the same as the underlying distribution of the training dataset, as discussed in Section~\ref{sec:ood_exp}. Each subfigure presents a model trained on a given dataset $\mathcal{D}_{\text{train}}$ where $\mathcal{D}_{\text{train}}$ may be \texttt{SVHN}, \texttt{MNIST} or \texttt{F-MNIST}. Furthermore, each curve stands for an empirical distribution $\{\widehat{pTU}(x_i|\mathcal{D}_{\text{train}})\}_{x_i\in\mathcal{X}'}$ where $\mathcal{X}'$ may be drawn from the underlying distribution of \texttt{SVHN}, \texttt{MNIST}, \texttt{F-MNIST} or \texttt{CIFAR}. In particular, we highlight the distributions with blue dashed lines when $P(X') = P(X)$, indicating that $P(X')$ represents the underlying distribution of the training set. It is important to note that these represent the same underlying distributions, but the data we sample from them may not necessarily be the same.
\begin{figure}[ht]
\begin{center}
\includegraphics[width=0.95\textwidth]{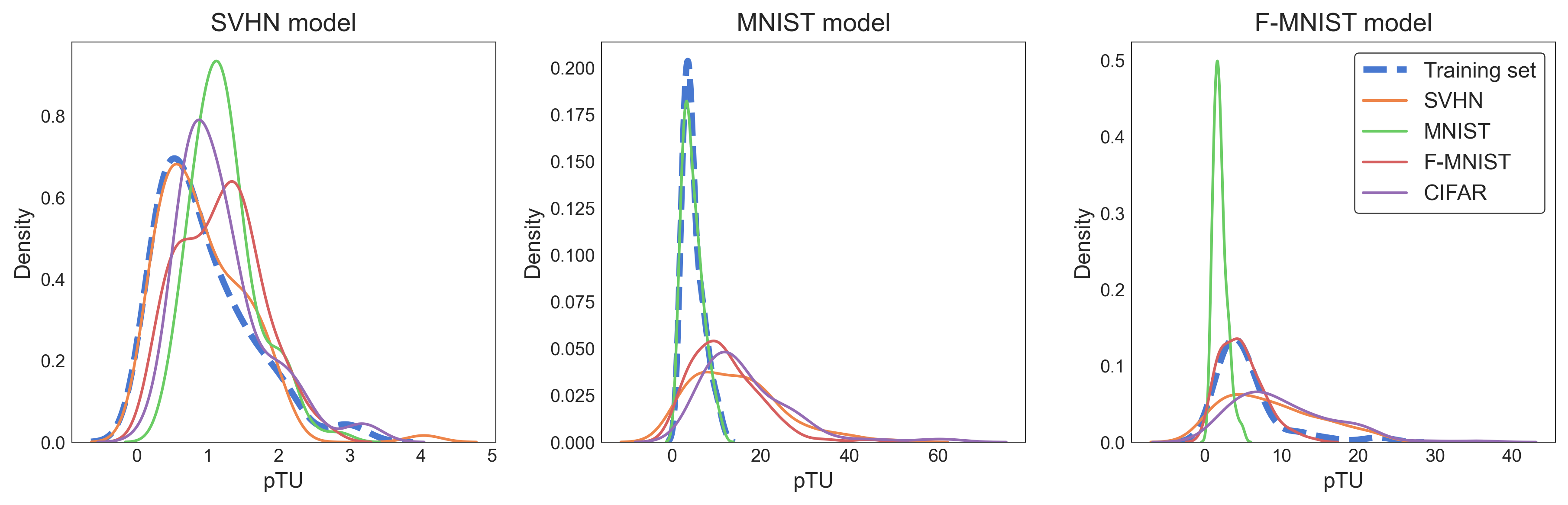}
\end{center}
\caption{Left: the empirical distributions $\{\widehat{pTU}(x_i|\mathcal{D}_{\texttt{SVHN}})\}_{x_i\in\mathcal{X}'}$ where $\mathcal{X}'$ drawn from different underlying distribution. Middle: the empirical distributions $\{\widehat{pTU}(x_i|\mathcal{D}_{\texttt{MNIST}})\}_{x_i\in\mathcal{X}'}$ where $\mathcal{X}'$ drawn from different underlying distribution. Right: the empirical distributions $\{\widehat{pTU}(x_i|\mathcal{D}_{\texttt{F-MNIST}})\}_{x_i\in\mathcal{X}'}$ where $\mathcal{X}'$ drawn from different underlying distribution.}
\label{fig:ood_distr}
\end{figure}

\subsection{More figures for Section \ref{sec:robust_exp}}
Figure~\ref{fig:robust_distri} shows the distribution of $\mathbf{pTU}(s_\mu(X)|\mathcal{D}_{\texttt{F-MNIST}}^{\bullet})$ for given $\mu$ and given training dataset $\mathcal{D}_{\texttt{F-MNIST}}^{\bullet}$, which represents the distribution of the results from a single experiment discussed in Section \ref{sec:robust_exp}. Each subfigures presents the shift level $s_\mu$ of the datasets and each curve stands for different training dataset $\mathcal{D}_{\texttt{F-MNIST}}^\bullet$.


\begin{figure}[ht]
\begin{center}
\includegraphics[width=0.95\textwidth]{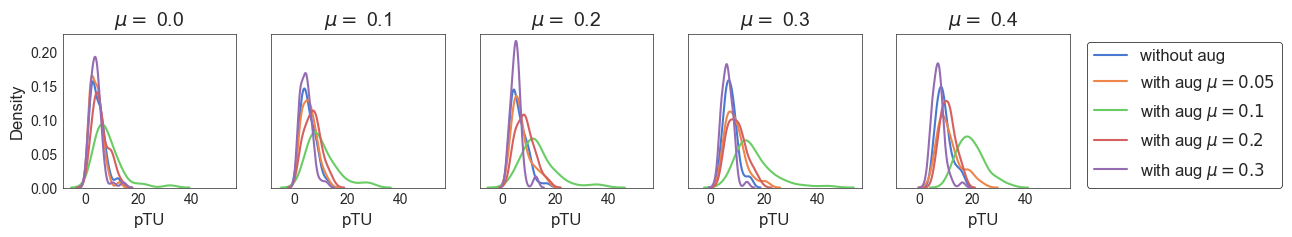}
\end{center}
\caption{Leftmost: five curves show the empirical distributions $\{\widehat{pTU}(s_0(x_i)|\mathcal{D}_{\texttt{F-MNIST}})\}$, $\{\widehat{pTU}(s_0(x_i)|\mathcal{D}_{\texttt{F-MNIST}}^{05})\}$, $\{\widehat{pTU}(s_0(x_i)|\mathcal{D}_{\texttt{F-MNIST}}^{1})\}$, $\{\widehat{pTU}(s_0(x_i)|\mathcal{D}_{\texttt{F-MNIST}}^{2})\}$ and $\{\widehat{pTU}(s_0(x_i)|\mathcal{D}_{\texttt{F-MNIST}}^{3})\}$, respectively. The other subfigures are similar, with the only difference being the shift level, meaning that $s_0$ should be changed to $s_{0.1}$, $s_{0.2}$, $s_{0.3}$ and $s_{0.4}$, respectively.}
\label{fig:robust_distri}
\end{figure}

\bibliography{reference}

\end{document}